\title{Towards Human-AI Complementarity \\ with Prediction Sets}
\date{\vspace{-10mm}}
\author[2]{Giovanni De Toni\footnote{The author contributed to this paper during an internship at the Max Planck Institute for Software Systems.}} 
\author[1]{Nastaran Okati} 
\author[1]{Suhas Thejaswi} 
\author[1]{Eleni Straitouri} 
\author[1]{Manuel~Gomez-Rodriguez}
\affil[1]{Max Planck Institute for Software Systems \texttt{\{nastaran,thejaswi,estraitouri,manuelgr\}@mpi-sws.org}}
\affil[2]{Fondazione Bruno Kessler and University of Trento, \texttt{giovanni.detoni@unitn.it}}
\begin{document}

\maketitle

\begin{abstract}
%
Decision support systems based on prediction sets have proven to be effective at helping human experts solve classification tasks. Rather than providing single-label predictions, these systems provide sets of label predictions constructed using conformal prediction, namely prediction sets, and ask human ex\-perts to predict label values from these sets.
%
%
In this paper, we first show that the prediction sets constructed using conformal prediction are, in general, suboptimal in terms of average accuracy.
%
%
Then, we show that the problem of fin\-ding the op\-ti\-mal pre\-dic\-tion sets under which the human experts achieve the highest average accuracy is \np-hard. 
More strongly, unless $\p = \np$, we show that the problem is hard to approximate to any factor less than the size of the label set.
%
%
However, we introduce a simple and efficient greedy algorithm that, for a large class of expert models and non-conformity scores, is guaranteed to find prediction sets that provably offer equal or greater performance than those constructed using conformal prediction.
Further, using a simulation study with both synthetic and real expert predictions, we demonstrate that, in practice, our greedy algorithm finds near-optimal prediction sets offering greater performance than conformal prediction.
\end{abstract}

\section{Introduction}
\label{sec:intro}
%
%
In recent years, there has been increasing excitement about the potential of decision support systems based on machine learning to help human experts make more accurate predictions in a variety of application domains, including medicine, education and science~\cite{jiao2020deep, whitehill2017mooc, davies2021advancing}.
In this context, the ultimate goal is human-AI complementarity---the predictions made by the human expert who uses a decision support system are more accurate than the predictions made by the expert on their own and by the classifier used by the decision support system~\cite{de2020regression,wilder2020learning,bansal2021most,steyvers2022bayesian,inkpen2023advancing}.

%
%
The conventional wisdom is that to achieve human-AI complementarity, decision support systems should help humans understand when and how to use their predictions to update their own.
As~a~re\-sult, a flurry of empirical studies has analyzed how factors such as confidence, explanations, or calibration influence when and how humans use the predictions provided by a decision support system~\citep{papenmeier2019model, wang2021explanations, vodrahalli2022uncalibrated, liu2023learning}.
Unfortunately, these studies have been so far inconclusive and it is yet unclear how to design decision support systems that achieve human-AI complementarity~\cite{yin2019understanding,zhang2020effect,suresh2020misplaced,lai2021towards,corvelo2024human}.

%
%
In this context, Straitouri et al.~\cite{straitouri2023improving,straitouri2024designing} have recently 
argued, both theoretically and empirically, that an alternative type of decision support systems may achieve human-AI complementarity,~by~de\-sign.
Rather~than~pro\-viding a single label prediction and letting a human expert decide when and how to use the predicted label to update their own prediction, 
these systems provide a set of label predictions, namely a prediction set, and 
ask the expert to predict a label value from the set.\footnote{There are many decision support systems used by experts that, under normal operation, forcefully limit experts' level of agency. For example, in aviation, there are automated, adaptive systems that prevent pilots from taking certain actions based on the monitoring of the environment.}
%
%
To construct each prediction set, these systems rely on a conformal predictor~\cite{vovk2005algorithmic,angelopoulos2023conformal}. The conformal predictor first computes a non-conformity score for each potential label value using the output provided by a classifier (\eg, the softmax scores),
and then adds a label value to the prediction set if its non-conformity score is below a data-driven threshold computed using a calibration set.
Further, to optimize the performance of these systems, Straitouri et al. have introduced several methods to efficiently find the optimal value of the threshold used by the conformal predictor.
\footnote{A threshold value is optimal if it maximizes the average accuracy achieved by an expert who predicts label values from the prediction sets created by the conformal predictor.}
%
However, it is unclear whether the optimal prediction sets 
maximizing the average accuracy achieved by an expert who uses such systems
can always be constructed using a deterministic threshold rule as the one used by a conformal predictor.
%
Motivated by this observation, in this work, our goal is to understand how to construct optimal prediction sets under which human experts achieve the highest average accuracy.

%
%
\xhdr{Our contributions}
%
%
We first demonstrate that there exist (many) data distributions for which the optimal prediction sets under which the human experts achieve the highest average accuracy cannot be constructed using a conformal predictor. 
%
%
Then, we show that the problem of fin\-ding the op\-ti\-mal pre\-dic\-tion sets is \np-hard by using a reduction from the $k$-clique problem~\cite{karp1972reducibility}. 
More strongly, unless $\p = \np$, we show that the problem is hard to approximate to any factor less than the size of the label set. 
%
%
However, we introduce a simple and 
computationally efficient
greedy algorithm that, 
for a large class of non-conformity scores and expert models parameterized by a mixture of multinomial logit models (MNLs),
%
is guaranteed to find prediction sets that provably offer equal or greater performance than those constructed using conformal prediction.
Moreover, using a simulation study with both synthetic and real expert predictions, we demonstrate that, in practice, our greedy algorithm finds near-optimal prediction sets offering greater performance than conformal prediction.
We have released an open-source implementation of our greedy algorithm as well as the code and data used in our experiments at \url{https://github.com/Networks-Learning/towards-human-ai-complementarity-predictions-sets}.

\xhdr{Further related work}
Our work builds upon further related work on set-valued predictors, assortment optimization, and learning under algorithmic triage.

%
The literature on set-valued predictors aims to develop predictors that, for each sample, output a set of label values, namely a prediction set~\cite{chzhen2021set}.
Set-valued predictors have not been designed nor evaluated by their ability to help human experts make more accurate predictions~\cite{yang2017cautious,mortier2021efficient,ma2021partial,nguyen2021multilabel},
except for a few notable exceptions~\cite{straitouri2023improving,straitouri2024designing,babbar2022on,cresswell2024conformal,zhang2024evaluating}.
These exceptions provide empirical evidence that conformal predictors, a specific type of set-valued predictors, may help human experts make more accurate predictions. Among these exceptions, the work by Straitouri et al.~\cite{straitouri2023improving,straitouri2024designing}, which we have already discussed previously, is most related to ours. 
In this context, it is also worth noting that a recent theoretical study has argued that prediction sets may also help experts create more accurate rankings~\cite{donahue2024two}.

%
The literature on assortment optimization aims to develop methods to help a seller select a subset of products from a universe of substitutable products, namely an assortment, with maximum expected revenue~\cite{talluri2004revenue,rusmevichientong2010dynamic,davis2013assortment,el2023joint, udwani2023submodular}.
Within this literature, the work most closely related to ours tackles the assortment optimization problem under customization~\cite{el2023joint, udwani2023submodular}, where there are different types of customers and each type of customer chooses products following a different multinomial logit model.
More specifically, by mapping products to label values, types of customers to ground truth label values, and revenue to accuracy, one could think of our problem as an assortment optimization problem under customization.
However, in the assortment optimization problem under customization, the type of each customer is known and thus may be offered different subsets of products whereas, in our problem, the ground truth label is unknown.
As a result, (the complexity of) our problem and our technical contributions are fundamentally different.

%
The literature on learning under algorithmic triage aims to develop classifiers that make predictions for a given fraction of the samples and leave the remaining ones to human experts, as instructed by a triage policy~\citep{raghu2019algorithmic,mozannar2020consistent,de2021classification,okati2021differentiable,charusaie2022sample,mozannar2023should}. 
In contrast, in our work, for each sample, a classifier is used to construct a prediction set and a human expert needs to predict a label value from the set.
In this context, it is also worth noting that learning under algorithmic triage has been extended to reinforcement learning settings~\cite{straitouri2021reinforcement,balazadeh2022learning,fuchs2023optimizing,tsirtsis2024responsibility}.

\section{Decision Support Systems Based on Prediction Sets}
\label{sec:formulation}
%
Given a multiclass classification task where, for each task instance, a human expert needs to predict the value of a ground truth label $y \in \Ycal = \{1, \ldots, L\}$, 
we focus on the design of a decision support system that, given a set of features $x \in \Xcal$, helps the expert by narrowing down the set of potential label values to a subset of them $\Scal(x) \subseteq \Ycal$.
Here, similarly as in Straitouri et al.~\cite{straitouri2023improving,straitouri2024designing}, we assume that, for any instance with features $x \in \Xcal$, the system asks the expert'{}s prediction $\hat y \in \Ycal$ to belong to the prediction set $\Scal(x)$, \ie, $\hat y \in \Scal(x)$.
The key rationale for restricting the expert'{}s agency is that, if we would allow the expert to predict label values from outside the prediction set, a good performance would depend on the expert developing a good understanding of when to predict a label from the prediction set. 
%
In this context, it is worth highlighting that Straitouri et al.~\cite{straitouri2024designing} run a large-scale human subject study to compare the above setting against an alternative setting where experts are allowed to predict label values from outside the prediction sets.
They found that, in the alternative setting, the number of predictions in which the prediction sets do not contain the true label and the experts succeed is consistently smaller than the number of predictions in which the prediction sets contain the true label and the experts fail. As a consequence, in the alternative setting, experts perform worse.
%
Refer to Figure~\ref{fig:decision-support-system} for an illustration of the decision support system. 
\begin{figure*}[t]
\centering
\includegraphics[width=\textwidth]{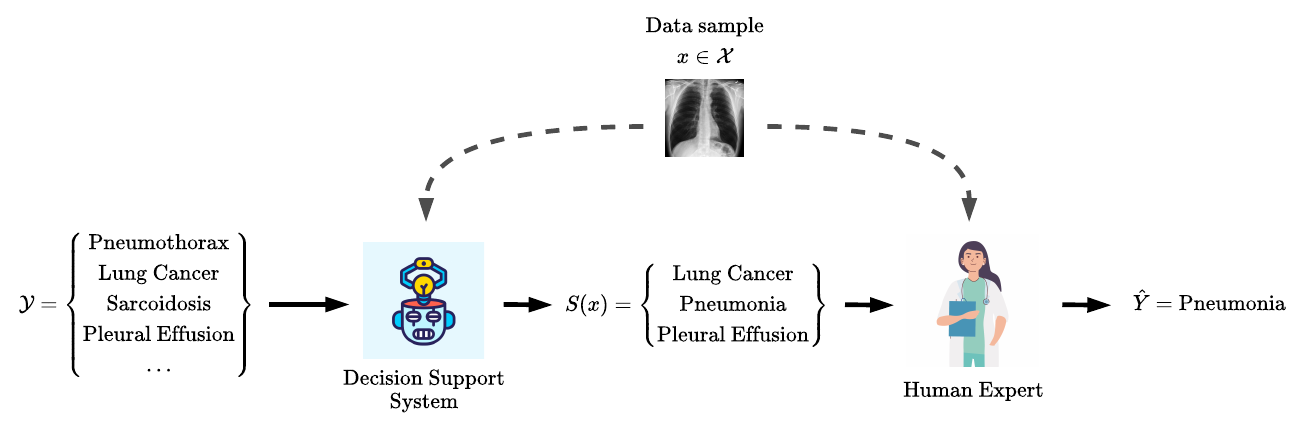}
\caption{Our automated decision support system. 
Given an instance with a feature vector $x$, the system $\Ccal$ helps the expert by automatically narrowing down the set of potential label values to a prediction set $\Scal(x) \subseteq \Ycal$. 
The system asks the expert to predict a label value $\hat y$ from $\Scal(x)$.
}
\label{fig:decision-support-system}
%
\end{figure*}

Then, for any $x \in \Xcal$, our goal is to find the optimal prediction set $\Scal^{*}(x)$ that maximizes the average accuracy of the expert'{}s prediction,\footnote{We denote random variables with capital letters and realizations of random variables with lowercase letters.} \ie, 
\begin{equation} \label{eq:goal}
\Scal^{*}(x) = \argmax_{\Scal \subseteq \Ycal} g(\Scal \given x) \quad \text{where} \quad g(\Scal \given x) = \EE_{Y \sim P(Y \given X), \hat{Y} \sim P_{\Scal}(\hat Y \given X, Y)}\left[\II\{\hat{Y}=Y\} \given X=x \right], 
\end{equation}
where $P(Y\given X)$ denotes the conditional distribution of the ground-truth label $Y$ and $P_{\Scal}(\hat Y \given X, Y)$ denotes the conditional distribution of the expert'{}s predictions $\hat Y$ under the prediction set $\Scal$.\footnote{The expert'{}s prediction $\hat Y$ and the ground truth label $Y$ may \emph{not} be conditionally independent given the set of features $X$ since, in most application domains of interest, the expert may have access to additional features. Otherwise, one may argue that pursuing human-AI complementarity is not a worthy goal~\cite{alur2023auditing}.}

\section{On the Suboptimality of Conformal Prediction}
\label{sec:methodology}
Given a user-specified parameter $\alpha \in [0, 1]$, 
a conformal predictor uses a choice of non-conformity score $s \,:\, \Xcal \times \Ycal \rightarrow \RR$
and a calibration set $\Dcal_{\text{cal}} = \{(x_i,y_i)\}_{i=1}^{m}$, where $(x_i, y_i) \sim P(X) P(Y \given X)$, 
to construct the prediction sets $\Scal(X) = \Scal_{\text{cp}}(X)$ as follows:
\begin{equation} \label{eq:prediction-set}
\Scal_{\text{cp}}(X) = \{y \given s(X, y) \leq \hat{q}_{\alpha} \},
\end{equation}
where $\hat{q}_{\alpha}$ is the $\lceil(m + 1)(1 - \alpha)\rceil/m$ empirical quantile of the non-conformity scores of the samples in the calibration set $\Dcal_{\text{cal}}$.
By using the above construction, the conformal predictor guarantees that the probability that the true label $Y$ belongs to the subset $\Scal_{\text{cp}}(X)$ is almost exactly $1-\alpha$, \ie, $1-\alpha \leq P(Y \in \Scal_{\text{cp}}(X)) \leq 1-\alpha+1/(m+1)$, as shown elsewhere~\cite{vovk2005algorithmic,angelopoulos2023conformal}.

Under common choices of non-conformity scores~\cite{sadinle2019least,angelopoulos2021uncertainty}, there are many data distributions for which the optimal prediction set under which the human expert achieves the highest accuracy cannot be constructed using a conformal predictor.
Consider the following example where $\Ycal = \cbr{1,2,3}$ and,
\begin{equation*}
    P(Y=y \given X=x) = \begin{cases}
        0.4 & \text{if}\,\, y=1 \\
        0.35 & \text{if}\,\, y=2 \\
        0.25 & \text{if}\,\, y=3
    \end{cases} \,\, \text{and} \,\,
    P_{\Scal}(\hat Y=\hat{y} \given X=x, Y=y) = \frac{C_{\hat{y} y}}{\sum_{y' \in \Scal} C_{y' y}},
\end{equation*}
where $C_{1,1} = C_{2,1} = C_{3,1} = 0.33$, $C_{1,2} = C_{1,3} = 0.4$,  $C_{2,2} = C_{3,3} =  0.6$, and $C_{3,2} = C_{2,3} = 0$. A brute force search reveals that the optimal prediction set is  $\cbr{2,3}$ and, under this set, the expert achieves accuracy $0.6$.
Now, assume we have access to a perfectly calibrated classifier $f(x) \in [0, 1]^{L}$, \ie, for all $x \in \Xcal$ and $y \in \Ycal$, it holds that $f_y(x) = P(Y=y \given X=x)$. 
Then, for any choice of $\alpha \in [0, 1]$, as long as the non-conformity scores rank the label set in decreasing order of $f_y(x)$, the prediction set provided by conformal prediction can only be among the sets $\cbr{\emptyset, \cbr{1}, \cbr{1,2}, \cbr{1,2,3}}$. 
Among these sets, the set under which the expert achieves the highest accuracy is $\cbr{1,2,3}$ and, under this set, the expert achieves accuracy $0.49 < 0.6$.

Motivated by the above example, one may think of closing the above performance gap by in\-cor\-po\-ra\-ting information about the distribution of experts'{} predictions in the definition of the non-conformity score.
However, we cannot expect to fully close the performance gap since, as we will show next, the problem of finding the optimal prediction sets is \np-hard to solve and approximate to any factor less than the size of the label set $\Ycal$.

\section{On the Hardness of Finding the Optimal Prediction Sets}
\label{sec:hardness}
In this section, we first show that, given $x \in \Xcal$, we cannot expect to find the optimal prediction set $\Scal^{*}(x)$ that maximizes the accuracy of the expert'{}s prediction in polynomial time:\footnote{All proofs are in Appendix~\ref{app:proofs}.}

\begin{theorem}\label{thm:nphardness}
The problem of finding the optimal prediction set, as defined in Eq.~\ref{eq:goal}, is \np-hard.
\end{theorem}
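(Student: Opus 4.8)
The plan is to prove hardness via a polynomial-time reduction from the $k$-clique problem to the decision version of our problem, namely: given the distributions $P(Y \given X=x)$ and $P_{\Scal}(\hat Y \given X=x, Y)$ specified through a matrix $C$, together with a threshold $\tau$, decide whether there exists a set $\Scal \subseteq \Ycal$ with $g(\Scal \given x) \geq \tau$. Since $g$ can be evaluated exactly from $C$ and $P(Y \given X=x)$, a polynomial-time solver for Eq.~\ref{eq:goal} would immediately answer this decision question, so it suffices to reduce $k$-clique to it. Given an instance $(G = (V,E), k)$ with $n = |V|$, I would set $\Ycal = V$ (hence $L = n$), take the ground-truth distribution to be uniform, $P(Y = y \given X=x) = 1/n$, and define the expert matrix by $C_{y,y} = 1$ for every $y$ and, for $y \neq y'$, $C_{y',y} = 0$ if $(y,y') \in E$ and $C_{y',y} = n$ otherwise; finally set $\tau = k/n$. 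Intuitively, placing two non-adjacent labels in the same set inflates each other's denominator and thus destroys the accuracy they contribute, so a set performs well exactly when its labels are pairwise adjacent, \ie, form a clique.

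Next I would turn this intuition into exact inequalities. Because only labels $y \in \Scal$ can be predicted, the objective reduces to
\begin{equation*}
g(\Scal \given x) = \frac{1}{n} \sum_{y \in \Scal} \frac{C_{y,y}}{\sum_{y' \in \Scal} C_{y',y}} = \frac{1}{n} \sum_{y \in \Scal} \frac{1}{1 + n\, d_{\Scal}(y)},
\end{equation*}
where $d_{\Scal}(y) = |\{y' \in \Scal \setminus \{y\} : (y,y') \notin E\}|$ counts the non-neighbors of $y$ inside $\Scal$. The forward direction is then immediate: if $\Scal$ is a clique of size $k$, every $d_{\Scal}(y)$ vanishes and $g(\Scal \given x) = k/n = \tau$, so a $k$-clique yields a set meeting the threshold. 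For a ground-truth label $y \notin \Scal$ the denominator could vanish, but such $y$ cannot be predicted and contributes nothing to $g$, so I would simply fix any convention (\eg, uniform over $\Scal$) for $P_{\Scal}(\cdot \given x, y)$ in that degenerate case, or equivalently perturb every entry of $C$ by an infinitesimal baseline.

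The crux is the converse: showing that $g(\Scal \given x) \geq \tau$ forces a $k$-clique to exist. Let $\text{good}(\Scal) = \{y \in \Scal : d_{\Scal}(y) = 0\}$ be the labels adjacent to all others in $\Scal$; these are pairwise adjacent and hence form a clique, so $|\text{good}(\Scal)| \leq \omega(G)$, the size of the largest clique in $G$. Each $y \in \text{good}(\Scal)$ contributes $1/n$, while each remaining label contributes at most $\tfrac{1}{n(1+n)}$, giving $g(\Scal \given x) \leq \tfrac{\omega(G)}{n} + \tfrac{1}{n+1}$. The role of the penalty weight $n$ is precisely to make this slack term strictly smaller than $1/n$: if $G$ had no $k$-clique, then $\omega(G) \leq k-1$ and every set would satisfy $g(\Scal \given x) \leq \tfrac{k-1}{n} + \tfrac{1}{n+1} < \tfrac{k}{n} = \tau$, contradicting $g(\Scal \given x) \geq \tau$. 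Hence the threshold $\tau = k/n$ is attainable if and only if $G$ contains a $k$-clique, and since the construction is clearly polynomial, the reduction establishes \np-hardness. The main obstacle I anticipate is exactly calibrating the non-edge weight so that it is large enough to render every ``confused'' label negligible against the $1/n$ gain of a clique label, yet bounded by a polynomial in $n$; the choice $C_{y',y} = n$ is what makes these two regimes separate cleanly.
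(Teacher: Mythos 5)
Your proof is correct, and it closes the hard (reverse) direction by a genuinely different argument than the paper's. Both proofs share the same skeleton: reduce $k$-clique to the decision version of Eq.~\ref{eq:goal} with labels equal to vertices, uniform ground truth $1/|\Vcal|$, confusion weight zero across edges and positive across non-edges, and threshold $k/|\Vcal|$; both also dispose of the degenerate case $y \notin \Scal$ the same way (such terms contribute nothing to $g$, so any convention works). The difference is in the weights and in what they force you to prove. The paper sets every positive entry of column $y$ to $1/\Nwhat_{\Gcal}(y)$, so $\Cb$ is literally a column-stochastic confusion matrix and each $y \in \Scal$ contributes $\frac{1}{|\Ycal|}\cdot\frac{1}{\Nwhat_{\Scal}(y)}$; under this choice a vertex with a single non-neighbor in $\Scal$ contributes only a factor $2$ less than a clique vertex, so the slack from non-clique vertices is not negligible, and the paper needs a pruning lemma (Lemma~\ref{lemma:objective-function-property}: iteratively removing $\arg\max_{y\in\Scal}\Nwhat_{\Scal}(y)$ never decreases $g$) to convert any high-value set into a clique of size at least $k$. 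You instead amplify the penalty --- diagonal weight $1$, non-edge weight $n$ --- so that any vertex with at least one non-neighbor inside $\Scal$ contributes at most $\frac{1}{n(n+1)}$, and all such vertices together contribute less than $1/n$; since the vertices with no non-neighbor in $\Scal$ are pairwise adjacent and hence already form a clique, a one-line counting bound $g(\Scal \given x) \le \frac{|\mathrm{good}(\Scal)|}{n} + \frac{1}{n+1}$ separates the two cases and no pruning argument is needed. Your route buys a more elementary, lemma-free converse; the paper's route buys a construction in which $\Cb$ is a genuine confusion matrix as stated (your $\Cb$ has column sums $1 + n\,\Nwhat_{\Gcal}(y)$, which is harmless but deserves the remark that MNL probabilities are invariant to rescaling each column, so normalizing yields an equivalent stochastic matrix), and its pruning lemma is reused verbatim in the proof of the inapproximability result (Theorem~\ref{thm:apxhardness}), whereas your separation argument would have to be re-derived in gap form there.
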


In the proof of the above theorem, 
we first reduce the $k$-clique problem,\footnote{Given a graph $\Gcal=(\Vcal,\Ecal)$ and an integer $k \leq |\Vcal|$, the $k$-clique problem seeks to decide whether there exists $\Scal \subseteq V$ with size $|\Scal| = k$ such that, for every $u, v \in S$, there exists $(u,v) \in \Ecal$.} which is known to be \np-complete~\cite{karp1972reducibility}, to an instance of the problem of deciding whether there exists a prediction set $\Scal \subseteq \Ycal$ such that $g(\Scal \given x) \geq B$ given a constant $B > 0$.
More specifically, given a $k$-clique problem defined over a graph $\Gcal=(\Vcal,\Ecal)$ with $k \leq |\Vcal|$, we reduce it to an instance of the above decision problem in which $\Ycal = \Vcal$, $B = \frac{k}{|\Vcal|}$ and, for all $y \in \Ycal$, we have that
$P(Y=y \given X=x) = \frac{1}{|\Vcal|}$ and
\begin{equation} \label{eq:expert-model}
    P_{\Scal}(\hat Y=\hat{y} \given X=x, Y=y) = \frac{C_{\hat{y} y}}{\sum_{y' \in \Scal} C_{y' y}} \,\, \text{where} \,\, C_{y' y} = \begin{cases}
    0 & \text{if}\,\, (y',y) \in \Ecal \\
    1/{\Nwhat_{\Gcal}(y)} & \text{otherwise},
  \end{cases}
\end{equation}
and
$\Nwhat_{\Gcal}(y)$ denotes the number of vertices that are not adjacent to $y$.
Then, since the above decision problem can be trivially reduced to the problem of finding the optimal prediction set (in polynomial time), 
we conclude that the problem is \np-hard. 

Motivated by the above result, we may think in looking for desirable properties for the objective function $g(\Scal\given x)$ such as monotonicity and submodularity,\footnote{A function $f : 2^{\Ycal} \rightarrow \RR$ is submodular if and only if, for every $\Scal \subseteq \Tcal \subseteq \Ycal$ and $y \in \Ycal \backslash \Tcal$, it holds that $f(\Scal \cup \{y\}) - f(\Scal) \geq f(\Tcal \cup \{y\}) - f(\Tcal)$.} which would allow for the design of approximation algorithms with non-trivial approximation guarantees~\cite{krause2014submodular}. 
Unfortunately, there are many data distributions for which the objective function is neither monotone nor submodular. For example, assume $\Ycal = \{1, 2, 3\}$, 
\begin{equation*}
    P(Y=y \given X=x) = \begin{cases}
        0.4 & \text{if}\,\, y=1 \\
        0.35 & \text{if}\,\, y=2 \\
        0.25 & \text{if}\,\, y=3
    \end{cases} \,\, \text{and} \,\,
    P_{\Scal}(\hat Y=\hat{y} \given X=x, Y=y) = \frac{C_{\hat{y} y}}{\sum_{y' \in \Scal} C_{y' y}},
\end{equation*}
where $C_{1,1} = 0.2$, $C_{1,2}=C_{2,1}=C_{1,3}=C_{3,1}=0.4$, $C_{2,2}=C_{3,3}=0.6$ and $C_{2,3}=C_{3,2}=0$.
For $\Scal = \cbr{1}\subseteq \Tcal =\cbr{1,2} \subset \Ycal$,
%
%
it holds that $g(\Scal \given x)=0.4 > g(\Tcal \given x) = 0.34$ and $g(\Tcal \given x) = 0.34 < g(\Ycal \given x) = 0.44$, and thus we can conclude it is not monotone.
%
%
Moreover, it also holds that $g(\Scal\cup\cbr{3} \given x) - g(\Scal\given x) = -0.116 < g(\Tcal\cup\cbr{3}\given x) - g(\Tcal\given x) = 0.096$, and thus we can conclude it is not submodular.

In fact, the following theorem shows that we cannot expect to find a polynomial-time algorithm to find a non-trivial approximation to our problem:
\begin{theorem}\label{thm:apxhardness}
The problem of finding the optimal prediction set, as defined in Eq.~\ref{eq:goal}, is \np-hard to approximate to any factor less than the size $L$ of the label set $\Ycal$.
\end{theorem}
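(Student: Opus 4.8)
The plan is to reuse the reduction from the $k$-clique problem already built for Theorem~\ref{thm:nphardness} and to argue that it is \emph{approximation preserving}, so that the well-known inapproximability of maximum clique transfers to our problem. First I would put the objective of the reduced instance in closed form. For any true label $y \in \Scal$, the denominator in Eq.~\ref{eq:expert-model} is $\sum_{y' \in \Scal} C_{y'y} = m_{\Scal}(y)/\Nwhat_{\Gcal}(y)$, where $m_{\Scal}(y)$ denotes the number of vertices of $\Scal$ that are \emph{not} adjacent to $y$ (with $y$ itself always counted, since there are no self-loops); because $C_{yy} = 1/\Nwhat_{\Gcal}(y)$, the probability of a correct prediction for $y \in \Scal$ is exactly $1/m_{\Scal}(y)$, while any $y \notin \Scal$ contributes nothing. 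With the uniform prior this yields
\begin{equation*}
g(\Scal \given x) = \frac{1}{|\Vcal|} \sum_{y \in \Scal} \frac{1}{m_{\Scal}(y)} = \frac{1}{|\Vcal|} \sum_{y \in \Scal} \frac{1}{1 + \deg_{\bar{\Gcal}[\Scal]}(y)},
\end{equation*}
where $\bar{\Gcal}[\Scal]$ is the subgraph of the complement graph $\bar{\Gcal}$ induced by $\Scal$.

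Second, I would record two facts about this objective. (i) Taking $\Scal$ to be a maximum clique of $\Gcal$ forces every denominator to equal $1$, so the optimum satisfies $g(\Scal^{*} \given x) \geq \omega(\Gcal)/|\Vcal|$, where $\omega(\Gcal)$ is the clique number. (ii) The sum $\sum_{y \in \Scal} 1/(1 + \deg_{\bar{\Gcal}[\Scal]}(y))$ is exactly the Caro--Wei lower bound on the independence number of $\bar{\Gcal}[\Scal]$, and this bound can be attained constructively: running the minimum-degree greedy algorithm on $\bar{\Gcal}[\Scal]$ returns, in polynomial time, an independent set of $\bar{\Gcal}[\Scal]$---equivalently, a clique of $\Gcal$---of size at least $\sum_{y \in \Scal} 1/(1 + \deg_{\bar{\Gcal}[\Scal]}(y)) = |\Vcal| \cdot g(\Scal \given x)$. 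Hence, from \emph{any} prediction set $\Scal$ I can extract in polynomial time a clique of $\Gcal$ of size at least $|\Vcal|\, g(\Scal \given x)$.

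Third, I would combine these to transfer the hardness. Suppose for contradiction that some polynomial-time algorithm approximates the optimal prediction set within a factor $\rho$. Given a graph $\Gcal$ on $|\Vcal|$ vertices, build the instance above, so that $L = |\Ycal| = |\Vcal|$; run the algorithm to get $\Scal$ with $g(\Scal \given x) \geq g(\Scal^{*} \given x)/\rho \geq \omega(\Gcal)/(|\Vcal|\,\rho)$ by fact~(i); then apply the extraction of fact~(ii) to obtain a clique of size at least $|\Vcal|\, g(\Scal \given x) \geq \omega(\Gcal)/\rho$. This is a polynomial-time $\rho$-approximation for maximum clique. Since maximum clique on $n$ vertices is \np-hard to approximate within $n^{1-\epsilon}$ for every $\epsilon > 0$, and here $n = |\Vcal| = L$, it follows that our problem is \np-hard to approximate within $L^{1-\epsilon}$ for every $\epsilon > 0$; as the factor $L$ itself is trivially achievable by any singleton prediction set, this establishes hardness for essentially any factor below $L$, unless $\p = \np$.

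The main obstacle is fact~(ii). The reduction preserves the \emph{optimum value} almost for free, but to transfer \emph{inapproximability} I must turn the algorithm's output back into a clique, in polynomial time and with a size guarantee that matches $|\Vcal|\, g(\Scal \given x)$; this is precisely what the constructive (derandomized) Caro--Wei / minimum-degree greedy argument provides, and verifying its guarantee against the closed form of $g$ is the crux of the proof. A secondary point to treat carefully is the degenerate case $\Scal = \emptyset$, which I would dispose of separately so that all denominators and the extraction step remain well defined.
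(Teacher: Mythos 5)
Your proposal is correct, and it shares the paper's overall skeleton: the same reduction (uniform prior over $\Ycal = \Vcal$, $C_{y'y}=0$ iff $(y',y)\in\Ecal$), the same closed form $g(\Scal \given x) = \frac{1}{|\Vcal|}\sum_{y \in \Scal} 1/\Nwhat_{\Scal}(y)$, and the same strategy of making the reduction approximation-preserving by converting an approximate prediction set $\Scal$ back into a clique of $\Gcal$ of size at least $|\Vcal|\, g(\Scal \given x)$, then invoking the ${|\Vcal|}^{1-\epsilon}$-inapproximability of maximum clique. Where you genuinely diverge is in the crux extraction step. The paper proves a bespoke monotonicity result (Lemma~\ref{lemma:objective-function-property}): iteratively deleting $y' = \argmax_{y\in\Scal}\Nwhat_{\Scal}(y)$ never decreases $g$, so the process terminates at a clique $\Scal'$ with $|\Scal'| = |\Vcal|\,g(\Scal' \given x) \geq |\Vcal|\,g(\Scal \given x)$. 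You instead observe that $|\Vcal|\,g(\Scal \given x)$ is exactly the Caro--Wei potential $\sum_{y\in\Scal} 1/\bigl(1+\deg_{\bar{\Gcal}[\Scal]}(y)\bigr)$ of the complement induced subgraph, and invoke the classical fact that the minimum-degree greedy algorithm constructively attains this bound, producing the desired clique directly. Both extractions are polynomial-time greedy procedures on the complement graph with the same guarantee; yours buys brevity and a citable classical lemma, while the paper's is self-contained. A further simplification on your side: you correctly note that only the lower bound $g(\Scal^{*} \given x) \geq \omega(\Gcal)/|\Vcal|$ on the optimum is needed, so you can dispense with the analogue of the paper's Lemma~\ref{lemma:maxobj} (the upper bound follows implicitly by applying your extraction to the optimal set). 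Two cosmetic points: the trivial factor-$L$ algorithm requires the singleton containing the \emph{most likely} label, not an arbitrary singleton; and, exactly as in the paper, what is actually established is hardness for factor $L^{1-\epsilon}$ for every $\epsilon>0$, which is the precise content behind the theorem's phrasing ``any factor less than $L$.''
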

In the proof of the above theorem, we first show that, given a polynomial-time $\alpha$-approximation algorithm for the problem of finding the optimal prediction set, we can obtain a polynomial-time $\alpha$-approximation algorithm for the problem of finding the maximum clique in a graph $\Gcal = (\Vcal, \Ecal)$.\footnote{Given a graph $\Gcal=(\Vcal,\Ecal)$, the maximum clique problem seeks to find the largest $\Scal \subseteq \Vcal$ such that, for every $u, v\in \Scal$, there exists $(u,v) \in \Ecal$.}
Then, since it is known that, for any $\epsilon > 0$, the latter problem is \np-hard to approximate to a factor ${|\Vcal|}^{1-\epsilon}$~\cite{zukerman2006linear}, we can conclude that the problem of finding the optimal prediction set is \np-hard to approximate to a factor ${|\Ycal|}^{1-\epsilon}$.

While the above hardness results may be discouraging, in what follows, we will introduce a simple greedy algorithm that provably offers equal or greater performance than conformal prediction for a large class of non-conformity scores and expert models, and in practice, often succeeds at finding (near-)optimal prediction sets.

\begin{algorithm}[t]
\caption{Greedy algorithm}
\label{alg:greedy}
\DontPrintSemicolon
\footnotesize
\KwIn{Label set $\Ycal$, features $x$, classifier $f$, confusion matrix $\Cb$}
\KwOut{Prediction set $\Scal$}
\vspace{1mm}

$\Scal \leftarrow \varnothing$ \;
$\{y_{(1)}, \ldots, y_{(L)}\} \leftarrow \text{argsort}f(x)$ \hfill \tcp{Sort in descending order}
\For{$k \in \cbr{1,\ldots,L}$}{
    $\Scal_k \leftarrow \varnothing$ \;
    \While(\hfill\tcp*[h]{Add labels to the prediction set until we hit $k$}){$|\Scal_k| < k$}{ 
        $\Delta^{*} \leftarrow -\infty$\;
        \For{$y \in \{y_{(1)}, \ldots, y_{(k)}\} \backslash \Scal_k$}{
            $\Delta \leftarrow \hat{g}(\Scal_k \cup \{y\} \given x) - \hat{g}(\Scal_k \given x)$ \hfill 
            \tcp{Eval the marginal gain of adding $y$ to $\Scal_k$}
            \If{$\Delta > \Delta^{*}$} {
                 $\Delta^{*} \leftarrow \Delta$, $y^{*} \leftarrow y$\;
            }
        }
        $\Scal_k \leftarrow \Scal_k \cup \{ y^* \}$ \hfill \tcp{Add label offering the largest marginal gain} 
        \If{$ \hat{g}(\Scal_k \given x) > \hat{g}(\Scal \given x)$}{
            $\Scal \leftarrow \Scal_k$ \hfill \tcp{Update $\Scal$ if $\Scal_k$ achieves higher objective value} 
        }
    }
}
\Return{$\Scal$}\;
\end{algorithm}

\xhdr{A simple greedy algorithm} Given a sample with features $x \in \Xcal$ and a prediction set $\Scal \subseteq \Ycal$, our greedy algorithm estimates the accuracy of the expert'{}s prediction, as defined in Eq.~\ref{eq:goal}, using the following~es\-ti\-ma\-tor:
\begin{equation} \label{eq:empirical-goal}
\hat{g}(\Scal \given x) = \sum_{y \in \Scal} \underbrace{f_{y}(x)}_{(a)} \underbrace{\frac{C_{yy}}{\sum_{y' \in \Scal} C_{y'y}}}_{(b)}
\end{equation}
where (a) approximates $P(Y=y \given X=x)$ using a well-calibrated classifier $f(x) \in [0, 1]^{L}$ and, similarly as in Straitouri et al.~\cite{straitouri2023improving}, (b) approximates $P_{\Scal}(\hat Y = y \given X=x, Y=y)$ using a mixture of multinomial logit models (MNLs) parameterized by the confusion matrix of the predictions made by the expert on their own, \ie, $C_{y'y} = P_{\Ycal}(\hat Y=y' \given Y=y)$.

The greedy algorithm first ranks each label value $y \in \Ycal$ using the output $f_{y}(x)$ of the classifier.
Let $y_{(1)}, \ldots, y_{(L)}$ be the label values ordered according to such a ranking, where $\cdot_{(i)}$ denotes the $i$-th label value in the ranking and $f_{y_{(i)}}(x) \geq f_{y_{(j)}}(x)$ for all $i < j$. 
Then, it runs $L$ rounds and, at each $k$-th round, it starts from the prediction set $\Scal_k = \emptyset$ and iteratively adds to $\Scal_k$ the label value 
$y \in \{y_{(1)}, \ldots, y_{(k)}\} \backslash \Scal_k$
that provides the maximum marginal gain $\hat{g}(\Scal_k \cup \{y\} \given x) - \hat{g}(\Scal_k \given x)$ until it exhausts the set $\{y_{(1)}, \ldots, y_{(k)}\}$. 
Moreover, at each iteration and round, it keeps track of the set with the highest objective value.
At each of the $L$ runs of the greedy algorithm, at most $L$ elements are added to the set $\Scal$, and adding each element needs at most $L$ times computing the marginal gain $\hat{g}(\Scal \cup \{y\} \given x) - \hat{g}(\Scal \given x)$, which takes $O(L)$ to compute. Hence, our algorithm has an overall complexity of $O(L^4)$.
See Appendix~\ref{app:running-time} for a detailed running time analysis.
Algorithm~\ref{alg:greedy} provides a pseudocode implementation of the procedure. 

Importantly, the prediction sets provided by the greedy algorithm
are guaranteed to achieve higher objective value $\hat{g}$ than those provided by any conformal predictor using a non-conformity score $s(x, y)$ that is nonincreasing with respect to $f_y(x)$, as formalized by the following proposition:\footnote{Proposition~\ref{prop:greedy-vs-cp} can be generalized to conformal predictors with any non-conformity score as long as the ranking that our greedy algorithm uses is the same as the ranking induced by the non-conformity scores.}
\begin{proposition}\label{prop:greedy-vs-cp}
     For any $x\in\Xcal$, let $\Scal$ be the prediction set provided by Algorithm~\ref{alg:greedy} and $\Scal_{\text{cp}}$ be the prediction set provided by any conformal prediction with a non-conformity score $s(x, y)$ that is nonincreasing with respect to $f_y(x)$, then, it holds that $\hat{g}(\Scal\given x)\geq \hat{g}(\Scal_{\text{cp}}\given x)$.
\end{proposition}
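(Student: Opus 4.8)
The plan is to exploit the bookkeeping structure of Algorithm~\ref{alg:greedy}: I would show that the conformal set $\Scal_{\text{cp}}$ is necessarily one of the \emph{prefix} sets $\cbr{y_{(1)}, \ldots, y_{(k)}}$ of the ranking the greedy algorithm uses, and that the algorithm explicitly constructs and evaluates \emph{every} such prefix while retaining the best one seen. Since $\Scal_{\text{cp}}$ is therefore among the sets the greedy tests against its incumbent, the returned set $\Scal$ can only have an equal or larger objective value, which is exactly the claim.

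\textbf{Step 1 (conformal sets are prefixes of the ranking).} Recall $\Scal_{\text{cp}} = \cbr{y \given s(x, y) \leq \hat{q}_{\alpha}}$, and that the greedy orders the labels so that $f_{y_{(1)}}(x) \geq \cdots \geq f_{y_{(L)}}(x)$. First I would argue that, because $s(x,\cdot)$ is nonincreasing with respect to $f_y(x)$, the scores are nondecreasing along this ordering, i.e.\ $s(x, y_{(1)}) \leq \cdots \leq s(x, y_{(L)})$. Consequently the sublevel set $\cbr{y \given s(x, y) \leq \hat{q}_{\alpha}}$ is a prefix: setting $k' = |\Scal_{\text{cp}}|$, we get $\Scal_{\text{cp}} = \cbr{y_{(1)}, \ldots, y_{(k')}}$. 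The only delicate point is ties in $f$: if $f_{y_{(i)}}(x) = f_{y_{(j)}}(x)$, applying the nonincreasing property in both directions forces $s(x, y_{(i)}) = s(x, y_{(j)})$, so tied labels are either both inside or both outside $\Scal_{\text{cp}}$, and the prefix structure survives no matter how the greedy breaks ties. This is precisely the condition isolated in the footnote, namely that the ranking induced by the scores agrees with the one induced by $f$.

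\textbf{Step 2 (the greedy evaluates every prefix).} Next I would observe that at round $k$ the inner while-loop draws candidates only from $\cbr{y_{(1)}, \ldots, y_{(k)}}$ and halts when $|\Scal_k| = k$; since there are exactly $k$ such candidates, the set $\Scal_k$ at the end of round $k$ equals the full prefix $\cbr{y_{(1)}, \ldots, y_{(k)}}$, \emph{irrespective} of the order in which the marginal-gain rule inserted its elements. Hence, across its $L$ rounds, the greedy constructs and (via the update test $\hat{g}(\Scal_k \given x) > \hat{g}(\Scal \given x)$) compares against the incumbent every prefix set $\cbr{y_{(1)}, \ldots, y_{(k)}}$ for $k = 1, \ldots, L$.

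\textbf{Step 3 (bookkeeping yields the inequality).} Combining the two steps, at the end of round $k'$ the algorithm evaluates $\Scal_{k'} = \cbr{y_{(1)}, \ldots, y_{(k')}} = \Scal_{\text{cp}}$ and updates the incumbent so that $\hat{g}(\Scal \given x) \geq \hat{g}(\Scal_{k'} \given x) = \hat{g}(\Scal_{\text{cp}} \given x)$. Because the incumbent is only ever replaced by a strictly better set, this inequality is preserved through the remaining rounds down to the returned $\Scal$. The boundary case $\Scal_{\text{cp}} = \varnothing$ is covered by the initialization $\Scal \leftarrow \varnothing$ together with $\hat{g}(\varnothing \given x) = 0 \leq \hat{g}(\Scal \given x)$. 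I do not anticipate a genuine obstacle here; the whole argument is essentially combinatorial, and the only place that warrants care is the tie-handling in Step~1, which is what guarantees that $\Scal_{\text{cp}}$ is a true prefix of the particular ordering the greedy enumerates rather than merely a top-$k'$ set under a possibly different tie-break.
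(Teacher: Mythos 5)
Your proof is correct and follows essentially the same route as the paper's: identify $\Scal_{\text{cp}}$ with the prefix $\cbr{y_{(1)},\ldots,y_{(k)}}$ of the greedy's ranking, note that round $k$ of the algorithm necessarily constructs and evaluates that full prefix, and conclude via the incumbent bookkeeping that the returned set's objective dominates it. The only differences are presentational---the paper phrases the same chain of inequalities as a contradiction, while you argue directly and additionally spell out the tie-breaking and empty-set corner cases that the paper leaves implicit.
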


\section{Experiments with Synthetic Data}
\label{sec:synthetic}
In this section, we compare the average accuracy achieved by different simulated human experts using prediction sets constructed with our greedy algorithm (Algorithm~\ref{alg:greedy}), brute force search, and conformal prediction on several synthetic multiclass classification tasks where the experts and the classifier used by the greedy algorithm, brute force search, and conformal prediction achieve different accuracies on their own.

\begin{table*}[t]
\centering
\caption{Empirical average test accuracy achieved by 
four different (simulated) human experts, each with a different noise value $\gamma$, 
on their own (\textsc{None}) and using prediction sets constructed with conformal prediction (\textsc{Naive}, \textsc{Aps}, \textsc{Raps} and \textsc{Saps}) and with the greedy algorithm (\textsc{Greedy})
on four synthetic classification tasks. In each classification task, the classifier $f$ used by conformal prediction and the greedy algorithm achieves a different average accuracy $P(Y'=Y)$.
The number of labels is $L=10$, the size of the calibration set is $m=1000$,
and we do not include brute force search because it achieves the same performance as the greedy algorithm.
Each cell shows the average and standard deviation over $10$ runs.
We denote the best results for each classification task in bold. 
}
\begin{sc}
\resizebox{\textwidth}{!}{%
\begin{tabular}{@{}clcccc@{}}
\toprule
$\gamma$ & \textbf{Method} & $P(Y' = Y)=0.3$ & $P(Y' = Y)=0.5$ & $P(Y' = Y)=0.7$ & $P(Y' = Y)=0.9$ \\ \midrule
\multirow{6}{*}{0.3} & \textsc{Naive} & $0.340 \,\scriptstyle\pm 0.014$ & $0.588 \,\scriptstyle\pm 0.015$ & $0.799 \,\scriptstyle\pm 0.013$ & $0.944 \,\scriptstyle\pm 0.006$ \\
 & \textsc{Aps} & $0.341 \,\scriptstyle\pm 0.013$ & $0.587 \,\scriptstyle\pm 0.013$ & $0.804 \,\scriptstyle\pm 0.015$ & $0.941 \,\scriptstyle\pm 0.006$ \\
 & \textsc{Raps} & $0.341 \,\scriptstyle\pm 0.013$ & $0.587 \,\scriptstyle\pm 0.013$ & $0.804 \,\scriptstyle\pm 0.014$ & $0.941 \,\scriptstyle\pm 0.006$ \\
 & \textsc{Saps} & $0.340 \,\scriptstyle\pm 0.015$ & $0.585 \,\scriptstyle\pm 0.012$ & $0.804 \,\scriptstyle\pm 0.015$ & $0.940 \,\scriptstyle\pm 0.008$ \\
 & \textsc{Greedy} & $\bm{0.364 \,\scriptstyle\pm 0.015}$ & $\bm{0.605 \,\scriptstyle\pm 0.014}$ & $\bm{0.824 \,\scriptstyle\pm 0.012}$ & $\bm{0.953 \,\scriptstyle\pm 0.005}$ \\
 & \textsc{None} & $0.281 \,\scriptstyle\pm 0.018$ & $0.485 \,\scriptstyle\pm 0.019$ & $0.693 \,\scriptstyle\pm 0.018$ & $0.883 \,\scriptstyle\pm 0.008$ \\ \midrule
\multirow{6}{*}{0.5} & \textsc{Naive} & $0.328 \,\scriptstyle\pm 0.014$ & $0.564 \,\scriptstyle\pm 0.012$ & $0.774 \,\scriptstyle\pm 0.014$ & $0.932 \,\scriptstyle\pm 0.007$ \\
 & \textsc{Aps} & $0.329 \,\scriptstyle\pm 0.012$ & $0.565 \,\scriptstyle\pm 0.010$ & $0.787 \,\scriptstyle\pm 0.013$ & $0.932 \,\scriptstyle\pm 0.008$ \\
 & \textsc{Raps} & $0.330 \,\scriptstyle\pm 0.012$ & $0.566 \,\scriptstyle\pm 0.010$ & $0.787 \,\scriptstyle\pm 0.013$ & $0.932 \,\scriptstyle\pm 0.008$ \\
 & \textsc{Saps} & $0.329 \,\scriptstyle\pm 0.013$ & $0.563 \,\scriptstyle\pm 0.008$ &  $0.787 \,\scriptstyle\pm 0.014$ & $0.932 \,\scriptstyle\pm 0.009$ \\
 & \textsc{Greedy} & $\bm{0.353 \,\scriptstyle\pm 0.015}$ & $\bm{0.587 \,\scriptstyle\pm 0.010}$ & $\bm{0.805 \,\scriptstyle\pm 0.012}$ & $\bm{0.945 \,\scriptstyle\pm 0.004}$ \\
 & \textsc{None} & $0.261 \,\scriptstyle\pm 0.016$ & $0.446 \,\scriptstyle\pm 0.013$ & $0.644 \,\scriptstyle\pm 0.019$ & $0.843 \,\scriptstyle\pm 0.011$ \\ \midrule
\multirow{6}{*}{0.7} & \textsc{Naive} & $0.319 \,\scriptstyle\pm 0.012$ & $0.534 \,\scriptstyle\pm 0.013$ & $0.737 \,\scriptstyle\pm 0.013$ & $0.908 \,\scriptstyle\pm 0.006$ \\
 & \textsc{Aps} & $0.320 \,\scriptstyle\pm 0.008$ & $0.542 \,\scriptstyle\pm 0.012$ & $0.759 \,\scriptstyle\pm 0.015$ & $0.913 \,\scriptstyle\pm 0.008$ \\
 & \textsc{Raps} & $0.320 \,\scriptstyle\pm 0.008$ & $0.542 \,\scriptstyle\pm 0.012$ & $0.760 \,\scriptstyle\pm 0.015$ & $0.914 \,\scriptstyle\pm 0.008$ \\
 & \textsc{Saps} & $0.319 \,\scriptstyle\pm 0.009$ & $0.534 \,\scriptstyle\pm 0.012$ & $0.760 \,\scriptstyle\pm 0.014$ & $0.915 \,\scriptstyle\pm 0.009$ \\
 & \textsc{Greedy} & $\bm{0.345 \,\scriptstyle\pm 0.011}$ & $\bm{0.573 \,\scriptstyle\pm 0.010}$ & $\bm{0.784 \,\scriptstyle\pm 0.013}$ & $\bm{0.938 \,\scriptstyle\pm 0.006}$ \\
 & \textsc{None} & $0.238 \,\scriptstyle\pm 0.011$ & $0.380 \,\scriptstyle\pm 0.015$ & $0.540 \,\scriptstyle\pm 0.018$ & $0.716 \,\scriptstyle\pm 0.013$ \\ \midrule
\multirow{6}{*}{1.0} & \textsc{Naive} & $0.314 \,\scriptstyle\pm 0.015$ & $0.517 \,\scriptstyle\pm 0.011$ & $0.714 \,\scriptstyle\pm 0.015$ & $0.894 \,\scriptstyle\pm 0.012$ \\
 & \textsc{Aps} & $0.316 \,\scriptstyle\pm 0.013$ & $0.525 \,\scriptstyle\pm 0.009$ & $0.733 \,\scriptstyle\pm 0.014$ & $0.895 \,\scriptstyle\pm 0.010$ \\
 & \textsc{Raps} & $0.316 \,\scriptstyle\pm 0.013$ & $0.525 \,\scriptstyle\pm 0.009$ & $0.734 \,\scriptstyle\pm 0.015$ & $0.896 \,\scriptstyle\pm 0.010$ \\
 & \textsc{Saps} & $0.315 \,\scriptstyle\pm 0.014$ & $0.517 \,\scriptstyle\pm 0.011$ & $0.734 \,\scriptstyle\pm 0.015$ & $0.896 \,\scriptstyle\pm 0.009$ \\
 & \textsc{Greedy} & $\bm{0.348 \,\scriptstyle\pm 0.013}$ & $\bm{0.567 \,\scriptstyle\pm 0.011}$ & $\bm{0.782 \,\scriptstyle\pm 0.015}$ & $\bm{0.936 \,\scriptstyle\pm 0.007}$ \\
 & \textsc{None} & $0.214 \,\scriptstyle\pm 0.017$ & $0.303 \,\scriptstyle\pm 0.014$ & $0.382 \,\scriptstyle\pm 0.021$ & $0.452 \,\scriptstyle\pm 0.021$ \\ \bottomrule
\end{tabular}%
}
\end{sc}
\label{tab:result-synthetic-experiment}
\vspace{-3mm}
\end{table*}

\textbf{Experimental setup}. We create several synthetic multiclass classification tasks, each with $n = 20$ features per sample and varying difficulty.
%
Out of 20 features per sample, only $d=4$ of these features are \textit{informative}\footnote{A feature is informative if its value correlates with the label value.} while the rest are drawn at random.
%
Refer to Appendix~\ref{app:implementation-details} for more details about the classi\-fi\-ca\-tion tasks.
For each classification task, we generate $19{,}000$ samples, which we split into a training set ($16{,}000$ samples), a calibration set ($1000$ samples), a validation set ($1000$ samples) and a test set ($1000$ samples).

We use the first half of the samples in the training set to train a multinomial logistic regression model $f(x)$.
This model is used by the greedy algorithm, brute force search and conformal prediction. It achieves a different average test accuracy $P(Y' = Y)$, depending on the difficulty of the classification task.
We use the second half of the samples in the training set to train another multinomial logistic regression model $\hat{f}(x)$.
However, during the training of this model, we modify the value $a$ of one of the (informative) features of each training sample to $(1-\gamma) a + \gamma \epsilon$, where $\epsilon \sim \Ncal(0,1)$ and $\gamma \in [0,1]$ controls the average accuracy of the resulting model.
Then, we use the (estimated) confusion matrix $\Cb(\gamma)$ of the predictions made by $\hat{f}(x)$ to model (the predictions made by) the simulated expert by means of a mixture of MNLs, \ie,
$P_{\Scal}(\hat Y = y \given X=x, Y=y) = \frac{C_{yy}(\gamma)}{\sum_{y' \in \Scal} C_{y'y}(\gamma)}$.

Further, we use the calibration set to calibrate the (softmax) outputs of the logistic regression model $f$ using top-$k$-label calibration~\cite{gupta2021top} with $k=5$. We also use it to estimate the confusion matrices $\Cb(\gamma)$ that parameterize the mixture of MNLs used to model (the predictions made by) the simulated human expert, and calculate the quantile $\hat{q}_{\alpha}$ used by conformal prediction.
Finally, we use the test set to evaluate the average accuracy achieved by the simulated expert using prediction sets constructed with our greedy algorithm, brute force search and conformal prediction.
Here, note that our greedy algorithm and brute force search have access to the true mixtures of MNLs used to model the simulated human expert.
In our experiments, we implement conformal prediction using several non-conformity scores:
\begin{align*}
s(x,y) &= 1 - f_{y}(x) \,\, \text{(\textsc{Naive},~\cite{vovk2005algorithmic})}, \quad s(x,y) = \sum_{y' : f_{y'}(x) \leq f_{y}(x)} f_{y'}(x) \,\, \text{(\textsc{Aps},~\cite{romano2020aps})}, \\ \quad s(x,y) &= f_{y}(x) + \sum_{y' : f_{y'}(x) \leq f_{y}(x)} f_{y'}(x) + \lambda_{raps} \left( o(x,y) - k_{reg} \right)^+ \qquad \text{(\textsc{Raps},~\cite{angelopoulos2021uncertainty})},
    \\
    s(x, y) &= \begin{cases}
        \max_{y'} f_{y'}(x) & o(x,y) = 1\\
        \max_{y'} f_{y'}(x) + \lambda_{saps}\left( o(x,y) - 2 \right) & o(x,y) > 1
    \end{cases} \qquad \text{(\textsc{Saps},~\cite{huang2024saps})},
\end{align*}
%
where $o(x, y) = |\{y' : f_{y'}(x) \leq f_{y}(x)\}|$ denotes the ranking of label $y$ according to $f_y(x)$
and we decided to omit the randomization for \textsc{Aps}, \textsc{Raps} and \textsc{Saps} as it is only required to achieve exact $1 - \alpha$ coverage and it did not have an influence on the empirical average accuracy achieved by the simulated human experts in our experiments. 
For \textsc{Raps} and \textsc{Saps}, we run the procedure outlined in Appendix E in Angelopoulous et al.~\cite{angelopoulos2021uncertainty} to optimize the additional hyperparameters, $k_{regs}$ and $\lambda_{raps}$, for \textsc{Raps}, and $\lambda_{saps}$ for \textsc{Saps}, using the validation set. 
Further, for each non-conformity score and classification task, we report the results for the $\alpha$ value under which the expert achieves the highest average test accuracy and,
to avoid empty sets, 
we always include the label value with the lowest non-conformity score in the prediction sets.
%
In this context, note that, in practice, one would need to select $\alpha$ using a held-out dataset, however, our evaluation aims to show how our greedy algorithm improves over conformal prediction for \textit{any} value of $\alpha$.
%
Finally, we repeat each experiment ten times and, each time, we sample different training, calibration, 
validation
and test sets.

\textbf{Results.} 
We first estimate the average test accuracy achieved by four different (simulated) human experts, each with a different $\gamma$ value, on four classification tasks where the classifier $f$ achieves a different average accuracy $P(Y' = Y)$. We report their average test accuracy on their own (\textsc{None}) and when using prediction sets constructed with conformal prediction (\textsc{Naive}, \textsc{Aps}, \textsc{Raps} and \textsc{Saps}), our greedy algorithm (\textsc{Greedy}) and brute force search (\textsc{Brute Force Search}). 
Table~\ref{tab:result-synthetic-experiment} summarizes the results, where we have not included brute force search because it achieves the same performance as our greedy algorithm.
The results show that, using the greedy algorithm to construct prediction sets, the experts consistently achieve the highest average accuracy across classification tasks.
Moreover, the results also show that,
under the prediction sets constructed using the greedy algorithm,
the average accuracy achieved by the expert degrades gracefully as $\gamma$ increases whereas, under the prediction sets constructed using conformal prediction, the average accuracy degrades significantly.
Refer to Appendix~\ref{app:additional-classification-tasks} for additional results for $L \in \{25, 50\}$ showing that the relative gain in average accuracy offered by the greedy algorithm increases with the number of labels and noise $\gamma$.
Refer to Appendix~\ref{app:emp_coverage} for additional results showing that the empirical average coverage achieved by the prediction sets constructed using conformal prediction and our greedy algorithm may be a bad proxy for estimating the average accuracy achieved by human experts using prediction sets. 

To better understand why the prediction sets constructed by the greedy algorithm help human experts achieve higher average accuracy than those constructed by conformal prediction, 
we now look closer into the structure of the prediction sets.
Given a ground truth-label $Y=y$, let $\bar{y} = \argmax_{y'\neq y}C_{y'y}$ be the label that is most frequently mistaken with $y$.
Then, we estimate the empirical conditional probability that a prediction set includes $\{y, \bar{y}\}$ given $Y=y$ with the greedy algorithm and conformal prediction.  
Figure~\ref{fig:confusing-labels} summarizes the results for $\gamma=0.7$ and $P(Y'=Y)=0.7$ and $y\in\cbr{0,2,6,8}$.
Appendix~\ref{app:confusing-labels-frequency} includes additional results for other configurations.
The results show that, with the greedy algorithm, the empirical probability that a prediction set includes $\{y, \bar{y}\}$ given $Y=y$ is much lower (\ie, $2$-$3$x lower) than with conformal prediction despite it creates overall larger prediction sets.

\begin{figure*}[t]
    \centering
    \captionsetup[subfloat]{labelformat=empty,textfont=small}
    \subfloat{\includegraphics[width=0.5\textwidth, valign=c]{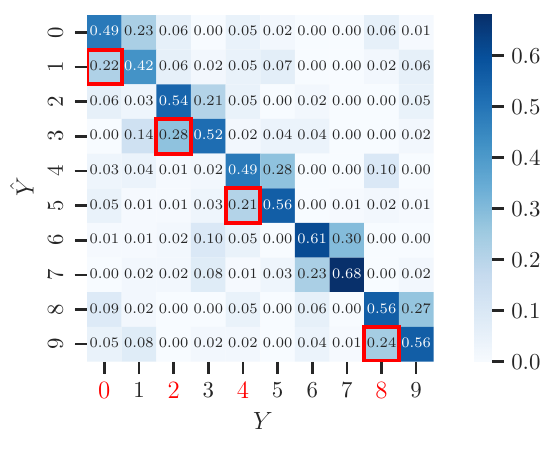}}
    \subfloat{\includegraphics[width=0.5\textwidth, valign=c]{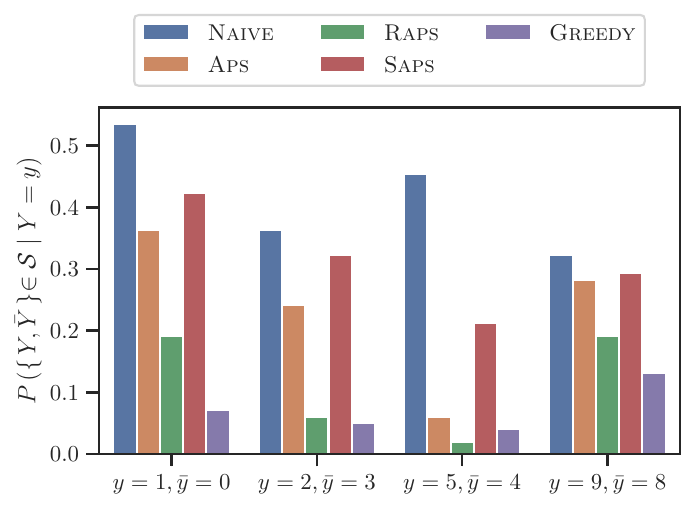}}

    \caption{(Left) Confusion matrix $\Cb$ for the predictions made by a (simulated) human expert on their own. The label $\bar{y} = \argmax_{y'\neq y} C_{y'y}$ that is most frequently mistaken with the ground truth-label $y$ is highlighted in red for $y\in\cbr{0,2,6,8}$. 
    (Right) Empirical conditional probability that a prediction set includes $\{y, \bar{y}\}$ given $Y=y$ with conformal prediction (\textsc{Naive}, \textsc{Aps}, \textsc{Raps} and \textsc{Saps}) and our greedy algorithm (\textsc{Greedy}).  
    In both panels, $\gamma = 0.7$ and $\PP(Y' = Y) = 0.7$.}
    \label{fig:confusing-labels}
    \vspace{-3mm}
\end{figure*}

\section{Experiments with Real Data}
\label{sec:real}
In this section, we compare the average accuracy achieved by a simulated human expert using prediction sets constructed with our greedy algorithm, brute force search and conformal prediction on a real multiclass classification task over noisy natural images.
The simulated human expert follows the mixture of MNLs introduced in Eq.~\ref{eq:empirical-goal},
which is parameterized by the (estimated) confusion matrix of the predictions made by real human experts on their own.\footnote{In Appendix~\ref{app:mnl-evaluation}, we evaluate the goodness of fit of the mixture of MNLs to predictions made by real human experts using a support system based on prediction sets~\cite{straitouri2024designing}.}

\xhdr{Experimental setup}
We experiment with the ImageNet16H dataset~\citep{steyvers2022bayesian}, which
was created using $1{,}200$ natural images from the ImageNet Large Scale Visual Recognition Challenge (ILSRVR) 2012 dataset~\citep{russakovsky2015imagenet}.
More specifically, in the ImageNet16H dataset, each of the above images was used to create four noisy images with different levels of phase noise distortion $\omega \in \{80, 95, 110, 125\}$ and the same ground-truth label $y$ from a label set $\Ycal$ of size $n=16$.
In addition, for each noisy image, the dataset contains (approximately) six label predictions made by human experts on their own.
In our experiments, we run and evaluate each method separately by grouping the above noisy images (and expert predictions) according to their level of noise.
For each group of images and method, we use the deep neural network classifier VGG-19~\cite{simonyan2014very} after $10$ epochs of fine-tuning as provided by Steyvers et al.~\cite{steyvers2022bayesian}.
Further, we randomly split the images (and expert predictions) in each group into two disjoint subsets, a calibration set ($800$ images), and a test set ($400$ images).
The accuracy of the (pretrained) VGG-19 classifier on the test set is $0.900 \pm 0.014$ ($\omega=80$), $0.895 \pm 0.009$ ($\omega=95$), $0.857 \pm 0.016$ ($\omega=110$) and $0.792 \pm 0.016$ ($\omega=125$).
%
We use the calibration set to (i) calibrate the (softmax) outputs of the VGG-19 scores using top-k-label calibration with $k = 5$,
(ii) estimate the confusion matrix $\bf C$ that parameterizes the mixture of MNLs used to model the simulated human expert, and (iii) calculate the quantile $\hat{q}_{\alpha}$ used by conformal prediction%
\footnote{For \textsc{Raps} and \textsc{Saps}, we further split the calibration set to obtain a (reduced) calibration ($400$ images) and validation ($400$ images) sets to calculate the quantile $\hat{q}_{\alpha}$ and optimize the hyperparameters of \textsc{Raps} and \textsc{Saps}, respectively.}.%
We use the test set to evaluate the average accuracy the simulated expert achieves using prediction sets constructed with our greedy algorithm, brute force search and conformal prediction.
Here, note that, similarly to in the experiments in synthetic data, our greedy algorithm and brute force search have access to the true mixture of MNLs used to model the simulated human expert.
We implement conformal prediction using the same non-conformity scores used in the experiments with synthetic data and, for each non-conformity score and group of images, we report the results for the $\alpha$ value under which the expert achieves the highest average test accuracy. 
In Appendix~\ref{app:conformal}, we report results for all $\alpha$ values.
To obtain error bars, we repeat each experiment $10$~times, sampling different calibration and test sets.

\begin{table*}[t]
\caption{Average test accuracy achieved by a (simulated) human expert on their own (\textsc{None}), and using the prediction sets constructed with conformal prediction (\textsc{Naive}, \textsc{Aps}, \textsc{Raps} and \textsc{Saps}), and our greedy algorithm (\textsc{Greedy}) on the ImageNet16H dataset.
%
We do not include brute force search because it achieves the same performance as the greedy algorithm.
%
Each cell shows the average and standard deviation over $10$ runs. We denote the best results in bold.
} \label{tab:real}
\begin{center}
    \begin{sc}
        \begin{tabular}{lcccc}
            \toprule
            \small
            Method & $\omega=80$ & $\omega=95$ & $\omega=110$ & $\omega=125$ \\
            \midrule
            \textsc{Naive}     & $\bm{0.957 \,\scriptstyle\pm 0.006}$ & $0.946 \,\scriptstyle\pm 0.008$ & $0.919 \,\scriptstyle\pm 0.006$ & $0.860 \,\scriptstyle\pm 0.008$  \\
            \textsc{Aps} &	$0.944 \,\scriptstyle\pm 0.004$ 	&$0.932 \,\scriptstyle\pm 0.005$ &	$0.902 \,\scriptstyle\pm 0.008$ 	&$0.852 \,\scriptstyle\pm 0.010$ 	
            \\
            \textsc{Raps} & $0.950 \,\scriptstyle\pm 0.009$ & $0.943 \,\scriptstyle\pm 0.010$ & $0.914 \,\scriptstyle\pm 0.010$ & $0.849 \,\scriptstyle\pm 0.011$ \\
            \textsc{Saps} & $0.953 \,\scriptstyle\pm 0.010$ & $0.942 \,\scriptstyle\pm 0.009$ & $0.918 \,\scriptstyle\pm 0.009$ & $0.855 \,\scriptstyle\pm 0.007$ \\
            \textsc{Greedy}    & $\bm{0.957 \,\scriptstyle\pm 0.007}$ & $\bm{0.951 \,\scriptstyle\pm 0.009}$ & $\bm{0.925 \,\scriptstyle\pm 0.008}$ & $\bm{0.874 \,\scriptstyle\pm 0.009}$ \\
            \textsc{None}    & $0.900 \,\scriptstyle\pm 0.002$ & $0.859 \,\scriptstyle\pm 0.003$ & $0.771 \,\scriptstyle\pm 0.005$ & $0.603 \,\scriptstyle\pm 0.007$ \\
            \bottomrule
        \end{tabular}
    \end{sc}
\end{center}
\vspace{-4mm}
\end{table*}

\xhdr{Results}
Table~\ref{tab:real} and Figure~\ref{fig:real} show the average test accuracy and complementary cumulative distribution (cCDF) of the test per-image accuracy achieved by a simulated human expert using the prediction sets constructed with 
conformal prediction (\textsc{Naive}, \textsc{APS}, \textsc{Raps} and \textsc{Saps}),
our greedy algorithm (\textsc{Greedy}) and
brute force search (\textsc{Brute Force Search}) for different values of noise $\omega$.
The results show that, similarly as in our experiments with synthetic data, the greedy algorithm achieves the same performance as brute force search.
Moreover, they also show that, using greedy algorithm to construct prediction sets, the expert achieves the highest average accuracy in all groups of images except the group with $\omega = 80$, where both the greedy algorithm and one of the conformal predictors offer comparable performance.
%
Similarly as in the synthetic experiments, refer to Appendix~\ref{app:emp_coverage} for additional results regarding the empirical average coverage achieved by the prediction sets constructed using conformal prediction and our greedy algorithm.

\section{Discussion and Limitations}
\label{sec:limitations}
In this section, we discuss several assumptions and limitations of our work, which open up interesting avenues for future work.

\xhdr{Hardness analysis}
In our hardness analysis, our reduction utilizes an instance of our problem in which, for every prediction set, the predictions made by experts follow a mixture of MNLs.
As an immediate consequence, this implies that, in general, the problem of finding the prediction set $\Scal$ that maximizes $\hat{g}(\Scal \given x)$ is \np-hard to approximate.
However, in our experiments, the greedy algorithm is almost always able to find such a set $\Scal$.
As a result, we hypothesize that there may be certain conditions on the parameters of the mixture of MNLs under which the problem can be efficiently approximated to a factor less than the size of the label set. 

\xhdr{Methodology}
Our greedy algorithm assumes that, for every prediction set, the predictions made by the human expert follow a parameterized expert model---the above mentioned mixture of MNLs.
It would be worthy to develop model-free algorithms since, in the context of prediction sets constructed using conformal prediction, they have been shown to be superior to their model-based counterparts~\cite{straitouri2024designing}.
To this end, a good starting point may be the literature on (con\-tex\-tual) combinatorial multi-armed bandits~\cite{chen2013combinatorial,lu2010contextual}, where one can map each arm to a label value and each subset of arms, namely a super arm, to a prediction set.

\xhdr{Evaluation}
The results of our experiments suggest that the prediction sets constructed using our greedy algorithm may help human experts make more accurate predictions than the prediction sets constructed using conformal prediction.
However, one may argue that the difference in performance is partly due to the fact that the non-conformity scores used in conformal prediction do not incorporate information about the distribution of experts' predictions.
Motivated by this observation, it would be important to investigate how to incorporate such information in the definition of non-conformity scores.
Moreover, in our experiments, the true distribution of experts' predictions matches the mixture of MNLs used by the greedy algorithm and brute force search.
However, in practice, there may be a mismatch between the true distribution of experts' predictions and the mixture of MNLs, and this may decrease performance.
%
%
Finally, in our experiments with real data, the ground truth labels are estimated by aggregating (multiple) predictions by human annotators using majority voting, however, this may introduce additional sources of errors that may influence our results~\cite{stutz2023tmlr}.

\xhdr{Broader impact} We have focused on maximizing the average accuracy of the predictions made by an expert using a decision support system based on prediction sets.
However, in high-stakes application domains, it would be important to extend our methodology to account for fairness considerations.

\begin{figure*}[t]
    \centering
    \captionsetup[subfloat]{labelformat=empty,textfont=small}
    \subfloat{\includegraphics[width=0.7\textwidth]{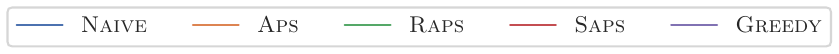}}\\[-2ex]
    \subfloat[$\omega=80$]{\includegraphics[width=.25\textwidth]{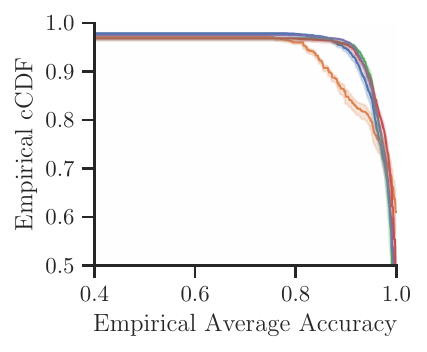}}
    \subfloat[$\omega=95$]{\includegraphics[width=.25\textwidth]{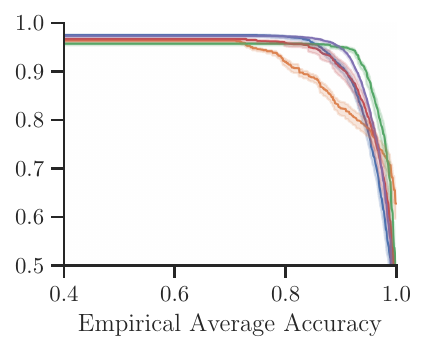}}
    \subfloat[$\omega=110$]{\includegraphics[width=.25\textwidth]{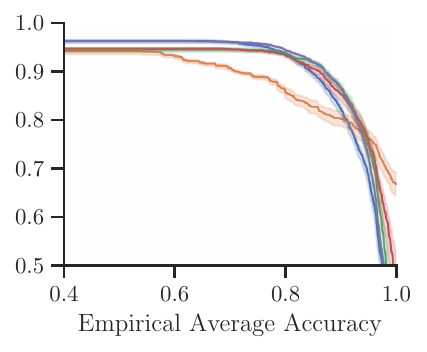}}
    \subfloat[$\omega=125$]{\includegraphics[width=.25\textwidth]{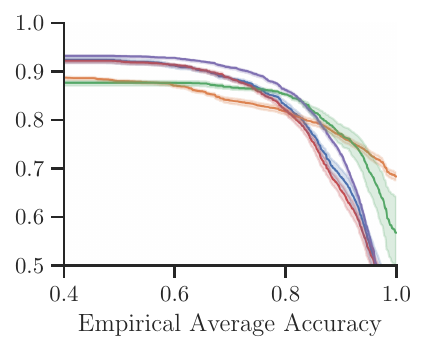}}
    \caption{Complementary cumulative distribution (cCDF) of the per-image test accuracy achieved by a simulated human expert using the prediction sets constructed with conformal prediction (\textsc{Naive}, \textsc{Aps}, \textsc{Raps} and \textsc{Saps}) and our greedy algorithm (\textsc{Greedy}) on the ImageNet16H dataset.}
    \label{fig:real}
    \vspace{-3mm}
\end{figure*}

\section{Conclusions}
\label{sec:conclusions}
We have looked at the problem of finding the optimal prediction sets under which human experts achieve the highest accuracy in a given multiclass classification task.
We have shown that this problem is \np-hard to solve and to approximate to any factor less than the size of the label set. 
However, we have empirically shown that, for a large parameterized class of expert models, a simple greedy algorithm consistently outperforms conformal prediction.

\vspace{2mm}
\xhdr{Acknowledgements} Gomez-Rodriguez acknowledges support from the European Research Council (ERC) under the European Union'{}s Horizon 2020 research and innovation programme (grant agreement No. 945719).
De Toni acknowledges support from the TANGO project (grant \#101120763-TANGO).
Views and opinions expressed are however those of the author(s) only and do not necessarily reflect those of the EU or HaDEA. Neither the EU nor the granting authority can be held responsible for them.

{ 
\bibliographystyle{unsrt}
\bibliography{human-aligned-prediction-sets}
}

\clearpage
\newpage

\appendix
\section{Proofs}
\label{app:proofs}

\subsection{Proof of Proposition~\ref{prop:greedy-vs-cp}}
We prove by contradiction. Assume there exist $x\in\Xcal$ such that $\hat{g}(\Scal\given x)< \hat{g}(\Scal_{\text{cp}}\given x)$. Let $k:=\abr{\Scal_{\text{cp}}}$ and $\Scal_k^*$ be the set providing the highest objective among all the sets seen at the $k$-th round of the greedy algorithm.
It should hold that:
\begin{align*}
    \hat{g}(\Scal_{\text{cp}}\given x) \underset{(i)}{=} \hat{g}(\cbr{y_{(1)}, \ldots, y_{(k)}}\given x) \underset{(ii)}{\leq} \hat{g}(\Scal_k^*\given x)\underset{(iii)}{\leq}  \hat{g}(\Scal\given x),
\end{align*}
which is a contradiction, hence, it should hold that $\hat{g}(\Scal\given x)\geq  \hat{g}(\Scal_{\text{cp}}\given x)$. Note that (i) is due to the fact that the ranking imposed by the non-conformity scores are the same as the ranking considered by our greedy algorithm so whenever the conformal prediction outputs a set of size $k$, the $k$ elements correspond to same $k$ elements that are considered in the $k$-th round in the greedy algorithm, \ie, $\cbr{y_{(1)}, \ldots, y_{(k)}}$; (ii) is due to the fact that $\Scal_k^*$ is the best set at the $k$-th round of the greedy algorithm; and (iii) is because $\Scal = \argmax_{\Scal_k^*\in\cbr{\Scal_1^*,\ldots, \Scal_L^*}} \hat{g}(\Scal_k^*\given x)$.\hfill\QED

\subsection{Proof of Theorem~\ref{thm:nphardness}}
To establish \np-hardness we reduce an instance $\langle \Gcal=(\Vcal,\Ecal),k \rangle$ of the $k$-clique problem,\footnote{Given a graph $\Gcal=(\Vcal,\Ecal)$ and an integer $k \leq |\Vcal|$, the $k$-clique problem seeks to decide whether there exists $\Scal \subseteq V$ with size $|\Scal| = k$ such that, for every distinct pair $u, v \in S$, there exists $(u,v) \in \Ecal$. We assume that the graph $\Gcal$ is simple, do not contain self loops and do not contain multiple edges between same pair of vertices.}
which is known to be \np-complete~\cite{karp1972reducibility}, to an instance $\langle x, \Ycal, B, \Cb\rangle$ of deciding whether there exists a prediction set $\Scal \subseteq \Ycal$ such that $g(\Scal \given x) \geq B$ given a constant $B > 0$, as follows:
$x \in \RR^d$, $\Ycal = \Vcal$, $B = \frac{k}{|\Vcal|}$ and, for all $y \in \Ycal$, we have that
$P(Y=y \given X=x) = \frac{1}{|\Vcal|}$ and
\begin{equation*} \label{eq:app:expert-model}
    P_{\Scal}(\hat Y=\hat{y} \given X=x, Y=y) = \frac{C_{\hat{y} y}}{\sum_{y' \in \Scal} C_{y' y}} \,\, \text{where} \,\, C_{y' y} = \begin{cases}
    0 & \text{if}\,\, (y',y) \in \Ecal \\
    1/{\Nwhat_{\Vcal}(y)} & \text{otherwise},
  \end{cases}
\end{equation*}
and $\Nwhat_{\Vcal}(y)$ denotes the number of vertices that are not adjacent to $y$ in $\Gcal$.
For any $\Scal \subseteq \Ycal$,
\begin{equation*}
g(\Scal \given x) = \sum_{y \in \Scal} f_{y}(x) \frac{C_{yy}}{\sum_{y' \in \Scal} C_{y'y}}
= \frac{1}{|\Ycal|} \sum_{y \in \Scal} \frac{1}{\Nwhat_{\Scal}(y)},
\end{equation*}
and $\Nwhat_{\Scal}(y)$ denotes the number of vertices that are not adjacent to $y$ in the subgraph induced\footnote{A graph $\Gcal_\Scal = (\Scal,\Ecal_\Scal)$ is a vertex-induced subgraph in graph $\Gcal=(\Vcal, \Ecal)$, if $\Scal \subseteq \Vcal$ and for every $u, v \in \Scal$, $(u,v) \in \Ecal_\Scal$ if and only if $(u,v) \in \Ecal$.} by $\Scal$. 
The transformation described above is dominated by the size of $\Cb$, which is $\bigO(|\Vcal|^2)$, so the reduction is polynomial time.

We note that the above decision problem can be reduced, in polynomial time, to the problem of finding the optimal prediction set. Precisely, given the optimal prediction set $\Scal^*$, for every $B \leq g(\Scal^* \given x)$ we return \yes, otherwise \no. Thus, establishing the \np-hardness for the decision problem suffices. 

\smallskip
($\Longleftarrow$) 
Here, we show that, if the (decision-variant of the) optimal prediction sets problem is a {\sc yes} instance then the $k$-clique problem is also a {\sc yes} instance.
Since the optimal prediction sets problem is a \yes instance, we have a subset $\Scal \subseteq \Ycal$ such that $g(\Scal \given x) \geq \frac{k}{|\Ycal|}$. We anticipate two possibilities, either $\max_{y \in \Scal} \Nwhat_{\Scal}(y) = 1$ or $\max_{y \in \Scal} \Nwhat_{\Scal}(y) > 1$, in both cases, we will show that there exists a clique of size at least $k$ in $\Gcal$.

If $\max_{y \in \Scal} \Nwhat_{\Scal}(y) = 1$ then every pair of vertices $y, y' \in \Scal$ are adjacent. Since $g(\Scal \given x) \geq \frac{k}{|\Ycal|}$, the size of $\Scal$ must be at least $k$ otherwise $g(\Scal \given x) < \frac{k}{|\Ycal|}$. So $\Scal$ induces a clique of size at least $k$ in $\Gcal$.

If $\max_{y \in \Scal} \Nwhat_{\Scal}(y) > 1$, we (iteratively) remove $y' = \arg \max_{y \in \Scal}\Nwhat_{\Scal}(y)$ that has least number of neighbours in $\Gcal_\Scal$, that is the subgraph induced by vertices in $\Scal$.
In Lemma~\ref{lemma:objective-function-property}, we show that, by removing $y' = \arg \max_{y' \in \Scal} \Nwhat_{\Scal}(y)$ from $\Scal$, it holds that $g(\Scal \setminus \{y'\} \given x) \geq g(\Scal \given x)$.
Further, the assertion $g(\Scal \setminus \{y'\} \given x) \geq g(\Scal \given x)$ remains true for each iteration as we remove $y' = \arg \max_{y \in \Scal} \Nwhat_{\Scal}(y)$ iteratively, to obtain $\Scal' \subseteq \Scal$ such that $\max_{y \in \Scal'} \Nwhat_{\Scal'}(y) = 1$ and it holds that $g(\Scal' \given x) \geq \frac{k}{|\Ycal|}$.
The size of $\Scal'$ is at least $k$, as any smaller subset results in $g(\Scal' \given x) < \frac{k}{|\Ycal|}$. Since $|\Scal'| \geq k$ and every $y, y' \in \Scal'$ are adjacent, $\Scal'$ induces a clique of size at least ${k}$. 

($\Longrightarrow$)
If $k$-clique is a \yes instance and $\Scal \subseteq \Vcal$ be a clique of size $k$, then $g(\Scal \given x) = \frac{k}{|\Vcal|} = \frac{k}{|\Ycal|}$. So the optimal predictions set problem is a \yes instance. This concludes the proof.\hfill\QED

\begin{lemma} \label{lemma:objective-function-property}
For any subset $\Scal \subseteq \Ycal$ with $\max_{y \in \Scal} \Nwhat(y) > 1$ and $y' = \arg \max_{y \in \Scal} \Nwhat_{\Scal}(y)$, it holds that $g(\Scal \setminus \{y'\} \given x) \geq g(\Scal \given x)$.
\end{lemma}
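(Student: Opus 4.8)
The plan is to work directly with the simplified form of the objective established just above, namely $g(\Scal \given x) = \frac{1}{|\Ycal|} \sum_{y \in \Scal} \frac{1}{\Nwhat_{\Scal}(y)}$, so that, since $\frac{1}{|\Ycal|}$ is a common positive factor, it suffices to prove $\sum_{y \in \Scal \setminus \{y'\}} \frac{1}{\Nwhat_{\Scal \setminus \{y'\}}(y)} \geq \sum_{y \in \Scal} \frac{1}{\Nwhat_{\Scal}(y)}$. First I would track how the non-neighbor counts change when $y'$ is deleted from $\Scal$. Writing $\Scal' = \Scal \setminus \{y'\}$ and letting $A = \Awhat_{\Scal}(y') \setminus \{y'\}$ denote the non-neighbors of $y'$ inside $\Scal$ other than $y'$ itself, I observe: for a vertex $y$ adjacent to $y'$ in $\Gcal_\Scal$, deleting $y'$ removes a neighbor and leaves $\Nwhat_{\Scal'}(y) = \Nwhat_{\Scal}(y)$; for $y \in A$, deleting $y'$ removes exactly one non-neighbor, so $\Nwhat_{\Scal'}(y) = \Nwhat_{\Scal}(y) - 1$. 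Since $\Awhat_{\Scal}(y')$ has $\Nwhat_{\Scal}(y')$ elements and contains $y'$, we have $|A| = \Nwhat_{\Scal}(y') - 1$.

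Next I would subtract the two sums term by term. The contributions of the vertices adjacent to $y'$ cancel, the term $\frac{1}{\Nwhat_{\Scal}(y')}$ for $y'$ is lost, and each $y \in A$ contributes a gain $\frac{1}{\Nwhat_{\Scal}(y)-1} - \frac{1}{\Nwhat_{\Scal}(y)} = \frac{1}{\Nwhat_{\Scal}(y)(\Nwhat_{\Scal}(y)-1)}$. Thus the inequality I must prove reduces to $\sum_{y \in A} \frac{1}{\Nwhat_{\Scal}(y)(\Nwhat_{\Scal}(y)-1)} \geq \frac{1}{\Nwhat_{\Scal}(y')}$. Two elementary observations make the gains well defined and bounded below: every $y \in A$ is a non-neighbor of both itself and of $y'$ in $\Gcal_\Scal$, so $\Nwhat_{\Scal}(y) \geq 2$; and the hypothesis $\max_{y \in \Scal} \Nwhat_{\Scal}(y) > 1$ gives $\Nwhat_{\Scal}(y') \geq 2$ as well, so all denominators are nonzero.

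The key step is to invoke the maximality of $y'$. Setting $d = \Nwhat_{\Scal}(y') = \max_{y \in \Scal} \Nwhat_{\Scal}(y)$, every $y \in A$ satisfies $\Nwhat_{\Scal}(y) \leq d$, and since $t \mapsto t(t-1)$ is increasing for $t \geq 1$, each gain is bounded below by $\frac{1}{d(d-1)}$. Summing over the $|A| = d-1$ vertices of $A$ then yields $\sum_{y \in A} \frac{1}{\Nwhat_{\Scal}(y)(\Nwhat_{\Scal}(y)-1)} \geq (d-1)\cdot\frac{1}{d(d-1)} = \frac{1}{d} = \frac{1}{\Nwhat_{\Scal}(y')}$, which is precisely the required bound, completing the proof. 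I expect the main obstacle to be recognizing this tight cancellation: the uniform per-vertex lower bound $\frac{1}{d(d-1)}$ and the exact count $|A| = d-1$ conspire so that the total gain from the non-neighbors of $y'$ exactly offsets the loss from deleting $y'$. A looser bound on the individual gains would fail to close the argument, so the argmax choice of $y'$ is essential rather than incidental.
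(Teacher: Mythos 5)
Your proof is correct and follows essentially the same route as the paper's: you decompose the difference $g(\Scal \setminus \{y'\} \given x) - g(\Scal \given x)$ into the per-vertex gains $\frac{1}{\Nwhat_{\Scal}(y)(\Nwhat_{\Scal}(y)-1)}$ over the non-neighbors of $y'$ minus the lost term $\frac{1}{\Nwhat_{\Scal}(y')}$, then use the maximality of $\Nwhat_{\Scal}(y')$ to lower-bound each gain by $\frac{1}{d(d-1)}$ and the count $|A| = d-1$ to obtain exact cancellation. Your version is slightly more careful than the paper's (e.g., explicitly verifying $\Nwhat_{\Scal}(y) \geq 2$ so no denominator vanishes), but the decomposition, the key inequality, and the counting are identical.
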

\begin{proof}
\begin{align*}
g(\Scal \setminus \{y'\} \given x) - g(\Scal \given x) & =
\frac{1}{|\Ycal|} \left(\sum_{y \in \Awhat_{\Scal \setminus \{y'\}}(y')} \left(\frac{1}{\Nwhat_{\Scal}(y)-1} - \frac{1}{\Nwhat_{\Scal}(y)}\right) - \frac{1}{ \Nwhat_{\Scal}(y')}\right)\\
& = \frac{1}{|\Ycal|} \left( \sum_{y \in \Awhat_{\Scal \setminus \{y'\}}(y')} 
\left( \frac{1}{(\Nwhat_{\Scal}(y)-1)(\Nwhat_{\Scal}(y))}\right)
- \frac{1}{\Nwhat_{\Scal}(y')}\right) \\
& \overset{(i)}{\geq} \frac{1}{|\Ycal|} \left(\sum_{y \in \Awhat_{S\setminus \{y'\}}(y')}
\left( \frac{1}{(\Nwhat_{\Scal}(y')-1) (\Nwhat_{\Scal}(y'))}\right)
- \frac{1}{\Nwhat_{\Scal}(y')}
\right) \\
& \overset{(ii)}{\geq} 0
\end{align*}
Note that $(i)$ and $(ii)$ are valid because $y' = \arg \max_{y \in \Scal} \Nwhat_\Scal(y)$.
\end{proof}

\subsection{Proof of Theorem~\ref{thm:apxhardness}}
To establish the hardness of approximation, in Lemma~\ref{lemma:apxhardness} we will show that, given a polynomial-time $\alpha$-approximation algorithm for the problem of finding the optimal prediction set, we can obtain a polynomial-time $\alpha$-approximation algorithm for the problem of finding a maximum clique \footnote{Given a graph $\Gcal=(\Vcal,\Ecal)$, the maximum clique problem seeks to find the largest $\Scal \subseteq \Vcal$ such that, for every $u, v\in \Scal$, there exists $(u,v) \in \Ecal$.} in a graph $\Gcal = (\Vcal, \Ecal)$.
It is known that, assuming $\p \neq \np$, for every $\epsilon > 0$, the latter problem is \np-hard to approximate to a factor ${|\Vcal|}^{1-\epsilon}$~\cite{zukerman2006linear}. So we conclude that the optimal prediction sets problem is \np-hard to approximate to a factor ${|\Ycal|}^{1-\epsilon}$.\hfill\QED

\begin{lemma}\label{lemma:apxhardness}
Suppose there exists a polynomial-time $\alpha$-approximation algorithm for the optimal prediction sets problem with $\alpha \geq 1$, then there exists a polynomial-time $\alpha$-approximation algorithm for the maximum clique problem.
\end{lemma}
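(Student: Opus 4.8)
The plan is to prove Lemma~\ref{lemma:apxhardness} by exhibiting a reduction that turns any $\alpha$-approximation algorithm for the optimal prediction sets problem into an $\alpha$-approximation algorithm for maximum clique. I would reuse \emph{exactly} the same instance construction as in the proof of Theorem~\ref{thm:nphardness}: given a graph $\Gcal = (\Vcal, \Ecal)$, set $\Ycal = \Vcal$, take $P(Y=y \given X=x) = 1/|\Vcal|$ for all $y$, and define the confusion matrix $\Cb$ so that $C_{y'y} = 0$ if $(y',y) \in \Ecal$ and $C_{y'y} = 1/\Nwhat_{\Vcal}(y)$ otherwise. The whole argument rests on the identity already derived there, namely $g(\Scal \given x) = \frac{1}{|\Ycal|}\sum_{y \in \Scal} \frac{1}{\Nwhat_{\Scal}(y)}$, together with Lemma~\ref{lemma:objective-function-property}.

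First I would establish the key structural fact that the \emph{optimal} prediction set value equals $\omega(\Gcal)/|\Vcal|$, where $\omega(\Gcal)$ is the maximum clique size. The direction $g(\Scal^* \given x) \geq \omega(\Gcal)/|\Vcal|$ follows by plugging a maximum clique $\Scal$ into the objective: inside a clique every $\Nwhat_{\Scal}(y) = 1$, so $g(\Scal \given x) = |\Scal|/|\Vcal| = \omega(\Gcal)/|\Vcal|$. For the reverse inequality $g(\Scal \given x) \leq \omega(\Gcal)/|\Vcal|$ for every $\Scal$, I would invoke Lemma~\ref{lemma:objective-function-property}: repeatedly deleting the vertex $y' = \arg\max_{y \in \Scal} \Nwhat_{\Scal}(y)$ never decreases the objective and terminates at a set $\Scal'$ that is a clique, so $g(\Scal \given x) \leq g(\Scal' \given x) = |\Scal'|/|\Vcal| \leq \omega(\Gcal)/|\Vcal|$. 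Hence the optimal objective value is precisely $\omega(\Gcal)/|\Vcal|$, and an optimal prediction set can be taken to be a maximum clique.

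Next I would use the $\alpha$-approximation algorithm as a black box. Running it on the constructed instance yields a set $\Scal$ with $g(\Scal \given x) \geq \frac{1}{\alpha} g(\Scal^* \given x) = \frac{1}{\alpha}\cdot\frac{\omega(\Gcal)}{|\Vcal|}$. I then apply the deletion procedure of Lemma~\ref{lemma:objective-function-property} to $\Scal$ to extract, in polynomial time, a genuine clique $\Scal'$ with $g(\Scal' \given x) \geq g(\Scal \given x)$. Since $\Scal'$ is a clique, $g(\Scal' \given x) = |\Scal'|/|\Vcal|$, and combining the inequalities gives $|\Scal'|/|\Vcal| \geq \frac{1}{\alpha}\cdot \omega(\Gcal)/|\Vcal|$, i.e. $|\Scal'| \geq \omega(\Gcal)/\alpha$. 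Thus $\Scal'$ is an $\alpha$-approximate maximum clique, produced in polynomial time.

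The main obstacle I expect is not any single hard computation but rather being careful about the \emph{rounding/clique-extraction} step: the approximation algorithm may return a non-clique set whose objective value is an average of reciprocal non-neighbor counts, so I must argue that the monotone deletion process (Lemma~\ref{lemma:objective-function-property}) both preserves the objective and provably terminates in a clique, and that it runs in polynomial time (at most $|\Vcal|$ deletions, each cheap). A secondary subtlety is confirming that the constructed $\Cb$ is a valid confusion matrix and that $\Nwhat_{\Vcal}(y) \geq 1$ for every $y$ (which holds since $y$ is never adjacent to itself), so the instance is well-defined and the reduction is genuinely polynomial in $|\Vcal|$.
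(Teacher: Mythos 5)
Your proof is correct, and it rests on the same reduction and the same key lemma as the paper: the identical instance construction from Theorem~\ref{thm:nphardness} and the vertex-deletion argument of Lemma~\ref{lemma:objective-function-property}. The organizational difference is worth noting. The paper establishes that a maximum clique attains the optimal objective value via a separate case-analysis lemma (Lemma~\ref{lemma:maxobj}, which splits on $k<k^\ast$, $k=k^\ast$, $k>k^\ast$), and then uses the deletion lemma only to extract a clique from the approximate solution. You instead derive \emph{both} facts from the single deletion lemma: any $\Scal$ can be stripped down to a clique without decreasing $g(\cdot\given x)$, which simultaneously gives the upper bound $g(\Scal\given x)\leq \omega(\Gcal)/|\Vcal|$ on the optimum and the clique-extraction step. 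This is more economical, and it also makes the reduction explicitly algorithmic---you run the deletion procedure on the output of the black-box approximation algorithm to produce an actual clique, which is what an approximation algorithm for maximum clique must return; the paper phrases this step as an existence claim (``there exists a clique of size at least $\floor{k^\ast/\alpha}$''), with the constructive procedure left implicit. Finally, your bound $|\Scal'|\geq \omega(\Gcal)/\alpha$, combined with integrality of $|\Scal'|$, cleanly yields the guarantee and avoids the paper's awkward expression $g(\Scal\given x)\geq\floor{k^\ast/(\alpha|\Ycal|)}$, whose literal reading (the floor of a quantity below $1$) is a typo. Your attention to well-definedness ($\Nwhat_{\Vcal}(y)\geq 1$ because $y$ is a non-neighbor of itself) matches the convention the paper states in its footnote.
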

\begin{proof}
Let $\Scal^\ast \subseteq \Ycal$ denote the optimal solution for the optimal prediction sets problem, as defined in Eq~\ref{eq:goal}. A subset $\Scal \subseteq \Ycal$ is an $\alpha$-approximation for the optimal prediction sets problem if $g(\Scal \given x) \cdot \alpha \geq g(\Scal^\ast \given x)$. 
We say an algorithm approximates an instance of the maximum clique problem within a factor $\alpha$ if it can find a clique of size at least $\floor{\frac{k^\ast}{\alpha}}$, when the graph contains a max-clique of size $k^\ast$.

The reduction closely resembles the construction outlined in the proof of Theorem~\ref{thm:nphardness}, with a subtly distinction being the absence of a bound $B$. For completeness, we will describe the construction again.
Given an instance $\langle G=(V,E) \rangle$ of the maximum clique problem, 
we construct an instance $\langle x, \Ycal, \Cb\rangle$ of the optimal prediction sets problem as follows:
$x \in \RR^d$, $\Ycal = \Vcal$ and, for all $y \in \Ycal$, we have that
$P(Y=y \given X=x) = \frac{1}{|\Vcal|}$ and
\begin{equation*} \label{eq:app:expert-model-2}
    P_{\Scal}(\hat Y=\hat{y} \given X=x, Y=y) = \frac{C_{\hat{y} y}}{\sum_{y' \in \Scal} C_{y' y}} \,\, \text{where} \,\, C_{y' y} = \begin{cases}
    0 & \text{if}\,\, (y',y) \in \Ecal \\
    1/{\Nwhat_{\Vcal}(y)} & \text{otherwise},
  \end{cases}
\end{equation*}
and $\Nwhat_{\Vcal}(y)$ denotes the number of vertices that are not adjacent to $y$ in $\Gcal$.
For any $\Scal \subseteq \Ycal$,
\begin{equation*}
g(\Scal \given x) = \sum_{y \in \Scal} f_{y}(x) \frac{C_{yy}}{\sum_{y' \in \Scal} C_{y'y}}
= \frac{1}{|\Ycal|} \sum_{y \in \Scal} \frac{1}{\Nwhat_{\Scal}(y)},
\end{equation*}
and $\Nwhat_{\Scal}(y)$ denotes the number of vertices that are not adjacent to $y$ in the subgraph induced by $\Scal$.

Let $\Scal^\ast \subseteq \Vcal$ be a maximum clique of size $k^\ast$ in $\Gcal$. For all $y \in \Scal^\ast$, $\Nwhat_{\Scal^\ast}(y) = 1$ and $g(\Scal^\ast \given x) = \frac{k^\ast}{|\Ycal|}.$
In Lemma~\ref{lemma:maxobj} we show that, for any $S \subseteq \Ycal$ if $\Scal$ is not a clique of size $k^\ast$, then $g(\Scal \given x) \leq g(\Scal^\ast \given x)$. 
Further, 
if $\Scal \subseteq \Ycal$ is an $\alpha$-approximation, then from the definition of approximation ratio, it holds that,
\begin{align*}
g(\Scal \given x)  \geq \frac{1}{\alpha} \cdot g(\Scal^\ast \given x) \implies
g(\Scal \given x)  \geq \floor{\frac{k^\ast}{\alpha\,|\Ycal|}}.
\end{align*}
In the remainder of this proof, we show that there exists a clique of size at least $\floor{\frac{k^\ast}{\alpha}}$ in $\Gcal$. To do so, we consider two possibilities: $\max_{y \in \Scal} \Nwhat(y) =1$ and $\max_{y \in \Scal} \Nwhat(y) > 1$. In both cases, we will establish the validity of the aforementioned claim.

If $\max_{y \in \Scal} \Nwhat_{\Scal}(y) = 1$ and $g(\Scal \given x) = \frac{k^\ast}{\alpha |\Ycal|}$, then $|\Scal| = \frac{k^\ast}{\alpha}$ and every $y, y' \in \Scal$ is adjacent in the subgraph induced by $\Scal$. So $\Scal$ induces a clique of size $\frac{k^\ast}{\alpha}$.

If $\max_{y \in \Scal} \Nwhat_{\Scal}(y) > 1$, we (iteratively) remove $y' = \arg \max_{y \in \Scal}\Nwhat_{\Scal}(y)$.
In Lemma~\ref{lemma:objective-function-property}, we show that, by removing $y' = \arg \max_{y' \in \Scal} \Nwhat_{\Scal}(y)$ from $\Scal$, it holds that $g(\Scal \setminus \{y'\} \given x) \geq g(\Scal \given x)$.
Further, the assertion $g(\Scal \setminus \{y'\} \given x) \geq g(\Scal \given x)$ remains true for each iteration as we remove $y' = \arg \max_{y \in \Scal} \Nwhat_{\Scal}(y)$ iteratively, to obtain $\Scal' \subseteq \Scal$ such that $\max_{y \in \Scal'} \Nwhat_{\Scal'}(y) = 1$ and it holds that $g(\Scal' \given x) \geq \floor{\frac{k^\ast}{\alpha |\Ycal|}}$.
The size of $\Scal'$ is at least $\floor{\frac{k^\ast}{\alpha}}$, as any smaller subset results in $g(\Scal' \given x) < {\frac{k}{\alpha |\Ycal|}}$. Since $|\Scal'| \geq \floor{\frac{k^\ast}{\alpha}}$ and every $y, y' \in \Scal'$ are adjacent. So, we conclude that $\Scal'$ induces a clique of size at least $\floor{\frac{k^\ast}{\alpha}}$. 
\end{proof}

\begin{lemma}\label{lemma:maxobj}
For any $\Scal \subseteq \Vcal$, if the vertices in $\Scal$ do not induce a maximum clique in $\Gcal$, then $g(\Scal \given x) \leq g(\Scal^\ast \given x)$, where $\Scal^\ast \subseteq \Vcal$ induces a maximum clique in $\Gcal$.
\end{lemma}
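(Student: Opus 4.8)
The plan is to show that the objective $g(\cdot \given x)$ arising from the reduction is maximized (over all vertex subsets) by a maximum clique, so that any $\Scal$ that does not induce a maximum clique necessarily satisfies $g(\Scal \given x) \leq g(\Scal^\ast \given x)$. I would establish this by reducing an arbitrary $\Scal \subseteq \Vcal$ to a clique \emph{without ever decreasing the objective}, and then bounding the objective of a clique purely by its size.

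First I would recall that, under the construction, $g(\Scal \given x) = \frac{1}{|\Ycal|} \sum_{y \in \Scal} \frac{1}{\Nwhat_{\Scal}(y)}$, and that any set $\Tcal$ inducing a clique has $\Nwhat_{\Tcal}(y) = 1$ for every $y \in \Tcal$ (within $\Tcal$ each vertex is non-adjacent only to itself, by the convention $y \in \Awhat_{\Tcal}(y)$). Hence $g(\Tcal \given x) = |\Tcal|/|\Ycal|$, and in particular $g(\Scal^\ast \given x) = k^\ast/|\Ycal|$, where $k^\ast$ is the size of the maximum clique.

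Next, starting from $\Scal$, I would repeat the following reduction: while $\max_{y \in \Scal} \Nwhat_{\Scal}(y) > 1$, remove the vertex $y' = \arg\max_{y \in \Scal} \Nwhat_{\Scal}(y)$. By Lemma~\ref{lemma:objective-function-property}, each such removal satisfies $g(\Scal \setminus \{y'\} \given x) \geq g(\Scal \given x)$, so the objective is nondecreasing along this process. Because each step strictly shrinks the set, the process terminates at some $\Scal' \subseteq \Scal$ with $\max_{y \in \Scal'} \Nwhat_{\Scal'}(y) = 1$, i.e.\ $\Scal'$ induces a clique, and throughout we retain $g(\Scal' \given x) \geq g(\Scal \given x)$.

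Finally, since $\Scal'$ is a clique, its size cannot exceed that of a maximum clique, so $|\Scal'| \leq k^\ast$, giving $g(\Scal \given x) \leq g(\Scal' \given x) = |\Scal'|/|\Ycal| \leq k^\ast/|\Ycal| = g(\Scal^\ast \given x)$, as required. The only points needing care are confirming that the iterative removal indeed reaches a clique and that Lemma~\ref{lemma:objective-function-property} applies at every step; both hold because the condition $\max_{y} \Nwhat > 1$ persists until termination and at each step we remove exactly the argmax vertex that the lemma requires. I do not expect a substantial obstacle here, since the essential monotonicity—that discarding the ``worst'' vertex never hurts the objective—is already supplied by Lemma~\ref{lemma:objective-function-property}, and the remainder is the elementary identity that a clique's objective equals its normalized size.
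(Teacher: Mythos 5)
Your proof is correct, but it takes a genuinely different route from the paper's own proof of this lemma. The paper argues by case analysis on $k = |\Scal|$ versus $k^\ast = |\Scal^\ast|$: for $k < k^\ast$ and $k = k^\ast$ it bounds $g(\Scal \given x)$ directly, and for $k > k^\ast$ it uses the observation that a non-adjacent pair forces $g(\Scal \given x) \leq \frac{k-1}{|\Ycal|}$ together with a contradiction argument (if $g(\Scal \given x) > g(\Scal^\ast \given x)$ held, $\Scal$ would have to contain a clique larger than $k^\ast$). You instead give a uniform argument: apply the vertex-removal monotonicity of Lemma~\ref{lemma:objective-function-property} iteratively to shrink an arbitrary $\Scal$ to a clique $\Scal'$ without decreasing the objective, then use the identity $g(\Scal' \given x) = |\Scal'|/|\Ycal| \leq k^\ast/|\Ycal| = g(\Scal^\ast \given x)$. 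Note that this removal argument is not your invention out of whole cloth---the paper itself uses exactly this iteration inside the proofs of Theorem~\ref{thm:nphardness} and Lemma~\ref{lemma:apxhardness}---but the paper does not deploy it to prove Lemma~\ref{lemma:maxobj}. Your route buys uniformity and rigor: it avoids the paper's per-case claims, some of which are stated loosely (e.g., the paper writes $g(\Scal \given x) = \frac{k-1}{|\Ycal|}$ in case $(i)$ and an equality in case $(ii)$, where only inequalities are warranted for sets that are not cliques), and it makes explicit the clean structural fact that, under this construction, cliques maximize the objective among all subsets. What the paper's approach buys is locality---each case is disposed of by a short direct bound without invoking the removal lemma---though at the cost of some redundancy, since the same removal machinery must be run again in Lemma~\ref{lemma:apxhardness} to actually extract a clique, which your proof essentially inlines.
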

\begin{proof}
Let $|\Scal| = k$ and $|\Scal^\ast| = k^\ast$. Based on the values $k$ and $k^\ast$ can take, we have three possibilities: ($i$) $k < k^\ast$, ($ii$) $k = k^\ast$, and ($iii$) $k > k^\ast$. In each case, we show that the aforementioned claim holds.

Case $(i)$: If $k < k^\ast$, then $g(\Scal \given x) = \frac{k-1}{|\Ycal|} < g(\Scal^\ast \given x)$.

Case $(ii)$: If $k = k^\ast$, then $g(\Scal \given x) = \frac{k^\ast}{|\Ycal|} = g(\Scal^\ast \given x)$.

Case $(iii)$: If $k > k^\ast$ and $\Scal$ do not induce a clique, then there exists at least one pair $y, y' \in \Scal$ that are not adjacent, and the value of $g(\Scal \given x) \leq \frac{k-1}{|\Ycal|}$. Without loss of generality, assume that $g(\Scal \given x) = \frac{k-1}{|\Ycal|}$. 
Note that, if there are more vertex pairs that are not adjacent, then the inequality is strict, \ie, $g(\Scal \given x) < \frac{k-1}{|\Ycal|}$.
Let $\Scal \setminus \{y\}$ be a clique of size $k-1$. In order for $g(x,\Scal) > g(x,\Scal^\ast)$ to hold, $\Scal$ must contain a clique of size greater than $k^\ast + 1$, which contradicts our premise that the maximum clique in $\Gcal$ has size $k^\ast$. More precisely,
\[g(\Scal \given x) > g(\Scal^\ast \given x) \implies 
\frac{k-1}{|\Ycal|} > \frac{k^\ast}{|\Ycal|} \implies
k > k^\ast + 1.\]
This concludes the proof.
\end{proof}

\newpage
\section{Running time analysis of the greedy algorithm} \label{app:running-time}
In this section, we present the complexity analysis for the greedy algorithm in Algorithm~\ref{alg:greedy}.
Adding each element requires computing the marginal gain $\Delta = \hat{g}(\Scal \cup \{y\} \given x) - \hat{g}(\Scal \given x)$ at most $k$ times. This computation in Line~8 can be efficiently performed in $O(k)$ time, where $k$ is the size of $\Scal_k$. Specifically, it can be rewritten as:
\begin{equation*}
\hat{g}(\Scal_k \cup \{y\} \given x) - \hat{g}(\Scal_k \given x) = 
\sum_{\hat{y} \in \Scal_k \cup \{y\}} f_{\hat{y}}(x) \frac{C_{\hat{y}\hat{y}}}{\sum_{y'\in \Scal_k  \cup \{y\}} C_{y'\hat{y}}}  - \sum_{\hat{y} \in \Scal_k }f_{\hat{y}}(x) \frac{C_{\hat{y}\hat{y}}}{\sum_{y'\in \Scal_k } C_{y'\hat{y}}}.
\end{equation*}
The term $\sum_{y' \in \Scal_k  \cup \{y\}}C_{y'\hat{y}}$ in the denominator can be computed as $\sum_{y' \in \Scal_k} C_{y'\hat{y}} + C_{y\hat{y}}$ by storing the value of $\sum_{y' \in \Scal_k} C_{y'\hat{y}}$ at the end of each iteration after Line~11. The loops in Line~5 and Line~7 each iterate over $k$, resulting in $O(k^3)$ iterations for each value of $k \in \{1,\dots,L\}$. So, the overall running time of our algorithm is $O(L^4)$.

\section{Implementation details for the experiments with synthetic and real data}
\label{app:implementation-details}

We report the implementation details and computational resources used to run the experiments in Section \ref{sec:synthetic} and Section \ref{sec:real}.
The code infrastructure was written using Python 3.8 and the standard set of scientific opensource libraries (e.g., \texttt{numpy}, \texttt{pandas}, \texttt{scikit-learn}, etc.). The full set of requirements can be found in the released code.
We run the experiment on a Linux machine equipped with an Intel\textsuperscript{\tiny\textregistered} Xeon(R) Gold 6252N CPU, with 96 cores and 1024 GB of RAM.
Practically, our experiments and the algorithms require little resources to be run. We parallelized the execution using \texttt{OpenMPI} and 50 physical cores and 20 GB of RAM. 

\textbf{Experiments with synthetic data.} We employ the \texttt{make\_classification} utility function of \texttt{scikit-learn} to generate the various prediction task. 
It is a convenient method to generate $L$-class classification tasks by varying several parameters such as the task difficulty, the number of labels and the number of informative features.
It generates $n$ clusters of points positioned on the vertices of a $d$-dimensional hypercube by adding interdependencies and noise to the features. 
In Section \ref{sec:synthetic}, we set the number of features to 20, the number of redundant features to 0 and the number of informative features to $d=4$ (for a $L=10$ label classification task).
We assign a balanced proportion of samples for each class.
We control the task difficulty by choosing the \texttt{class\_sep} parameter, which represents the length of the sides of the hypercubes, thus indicating how far apart are the various classes.
A smaller \texttt{class\_sep} implies a more difficult classification task.
We vary the \texttt{class\_sep} parameter to ensure the classifier and the humans span different ranges of accuracies.
Please refer to the original documentation for more information.\footnote{\url{
https://scikit-learn.org/stable/modules/generated/sklearn.datasets.make\_classification.html\#sklearn.datasets.make\_classification
}}

\textbf{Experiments with real data.} We use the data provided by Steyvers et al. \cite{steyvers2022bayesian} to evaluate our algorithm against the best conformal predictors.
The dataset is composed of 1200 images from the ImageNet-16H classification task. The authors provide also a noisy version of these images by applying a different phase noise $\omega$. 
Noisier images imply a more difficult classification task.
For each image and each phase noise, they provide also the softmax scores of several pre-trained classifiers for different levels of fine-tuning: baseline (no fine-tuning), between 0 and 1 epochs, 1 epochs and 10 epochs. 
We use the classifier scores (VGG-19 fine-tuned for 10 epochs) and the human classification performance alone for all the 1200 images for different levels of phase noise $\omega=\{80, 95, 110, 125\}$.
We chose the VGG-19 classifier because it is the one achieving consistently higher accuracy than the expert alone in the classification task, for all phase noise levels.
The data are freely available online.\footnote{\url{https://osf.io/2ntrf/wiki/home/}}
In our experiment, we used Steyvers et al. \textit{normalized} softmax scores which are obtained by dropping from VGG-19 all the irrelevant classes and by renormalizing.

\newpage
\section{Empirical average test accuracy on additional classification tasks}
\label{app:additional-classification-tasks}

In this section, we keep the same experimental setting as described in Section \ref{sec:synthetic} but we vary the number of labels $L \in \{25, 50\}$ of the classification task.
Table \ref{tab:additional-synthetic-results} summarizes the results.

\begin{table}[h!]
\centering
\caption{Empirical average test accuracy for $L \in \{25, 50\}$.}
\label{tab:additional-synthetic-results}
\smallskip
\resizebox{0.75\textwidth}{!}{%
\begin{tabular}{@{}clcccc@{}}
\toprule
\textbf{Noise} & \multicolumn{1}{c}{\textbf{Method}} & $P(Y' = Y) = 0.3$ & $P(Y' = Y) = 0.5$ & $P(Y' = Y) = 0.7$ & $P(Y' = Y) = 0.9$ \\ \midrule
\multirow{6}{*}{0.3} & \textsc{Naive} & $0.399 \,\scriptstyle\pm 0.012$ & $0.614 \,\scriptstyle\pm 0.009$ & $0.892 \,\scriptstyle\pm 0.011$ & $0.947 \,\scriptstyle\pm 0.005$ \\
 & \textsc{Aps} & $0.395 \,\scriptstyle\pm 0.011$ & $0.601 \,\scriptstyle\pm 0.008$ & $0.873 \,\scriptstyle\pm 0.009$ & $0.938 \,\scriptstyle\pm 0.008$ \\
 & \textsc{Raps} & $0.395 \,\scriptstyle\pm 0.011$ & $0.601 \,\scriptstyle\pm 0.008$ & $0.874 \,\scriptstyle\pm 0.009$ & $0.939 \,\scriptstyle\pm 0.008$ \\
 & \textsc{Saps} & $0.396 \,\scriptstyle\pm 0.010$ & $0.601 \,\scriptstyle\pm 0.008$ & $0.876 \,\scriptstyle\pm 0.013$ & $0.943 \,\scriptstyle\pm 0.007$ \\
 & \textsc{Greedy} & $\bm{0.431 \,\scriptstyle\pm 0.010}$ & $\bm{0.649 \,\scriptstyle\pm 0.011}$ & $\bm{0.902 \,\scriptstyle\pm 0.011}$ & $\bm{0.956 \,\scriptstyle\pm 0.005}$ \\
 & \textsc{None} & $0.301 \,\scriptstyle\pm 0.012$ & $0.490 \,\scriptstyle\pm 0.015$ & $0.805 \,\scriptstyle\pm 0.012$ & $0.904 \,\scriptstyle\pm 0.010$ \\ \midrule
\multirow{6}{*}{0.5} & \textsc{Naive} & $0.383 \,\scriptstyle\pm 0.011$ & $0.595 \,\scriptstyle\pm 0.009$ & $0.879 \,\scriptstyle\pm 0.011$ & $0.941 \,\scriptstyle\pm 0.005$ \\
 & \textsc{Aps} & $0.380 \,\scriptstyle\pm 0.010$ & $0.585 \,\scriptstyle\pm 0.009$ & $0.862 \,\scriptstyle\pm 
 0.011$ & $0.932 \,\scriptstyle\pm 0.008$ \\
 & \textsc{Raps} & $0.380 \,\scriptstyle\pm 0.010$ & $0.585 \,\scriptstyle\pm 0.009$ & $0.863 \,\scriptstyle\pm 0.011$ & $0.933 \,\scriptstyle\pm 0.008$ \\
 & \textsc{Saps} & $0.380 \,\scriptstyle\pm 0.009$ & $0.585 \,\scriptstyle\pm 0.011$ & $0.863 \,\scriptstyle\pm 0.014$ & $0.937 \,\scriptstyle\pm 0.007$ \\
 & \textsc{Greedy} & $\bm{0.417 \,\scriptstyle\pm 0.013}$ & $\bm{0.634 \,\scriptstyle\pm 0.012}$ & $\bm{0.894 \,\scriptstyle\pm 0.010}$ & $\bm{0.952 \,\scriptstyle\pm 0.008}$ \\
 & \textsc{None} & $0.278 \,\scriptstyle\pm 0.011$ & $0.455 \,\scriptstyle\pm 0.014$ & $0.767 \,\scriptstyle\pm 0.012$ & $0.879 \,\scriptstyle\pm 0.010$ \\ \midrule
\multirow{6}{*}{0.7} & \textsc{Naive} & $0.365 \,\scriptstyle\pm 0.012$ & $0.566 \,\scriptstyle\pm 0.009$ & $0.848 \,\scriptstyle\pm 0.010$ & $0.923 \,\scriptstyle\pm 0.007$ \\
 & \textsc{Aps} & $0.363 \,\scriptstyle\pm 0.009$ & $0.560 \,\scriptstyle\pm 0.010$ & $0.840 \,\scriptstyle\pm 0.012$ & $0.918 \,\scriptstyle\pm 0.008$ \\
 & \textsc{Raps} & $0.363 \,\scriptstyle\pm 0.009$ & $0.560 \,\scriptstyle\pm 0.011$ & $0.841 \,\scriptstyle\pm 0.011$ & $0.920 \,\scriptstyle\pm 0.007$ \\
 & \textsc{Saps} & $0.362 \,\scriptstyle\pm 0.009$ & $0.558 \,\scriptstyle\pm 0.015$ & $0.840 \,\scriptstyle\pm 0.014$ & $0.920 \,\scriptstyle\pm 0.008$ \\
 & \textsc{Greedy} & $\bm{0.408 \,\scriptstyle\pm 0.013}$ & $\bm{0.621 \,\scriptstyle\pm 0.014}$ & $\bm{0.885 \,\scriptstyle\pm 0.011}$ & $\bm{0.944 \,\scriptstyle\pm 0.008}$ \\
 & \textsc{None} & $0.244 \,\scriptstyle\pm 0.012$ & $0.391 \,\scriptstyle\pm 0.014$ & $0.661 \,\scriptstyle\pm 0.010$ & $0.772 \,\scriptstyle\pm 0.009$ \\ \midrule
\multirow{6}{*}{1.0} & \textsc{Naive} & $0.343 \,\scriptstyle\pm 0.012$ & $0.530 \,\scriptstyle\pm 0.017$ & $0.819 \,\scriptstyle\pm 0.012$ & $0.906 \,\scriptstyle\pm 0.009$ \\
 & \textsc{Aps} & $0.346 \,\scriptstyle\pm 0.011$ & $0.529 \,\scriptstyle\pm 0.016$ & $0.822 \,\scriptstyle\pm 0.013$ & $0.906 \,\scriptstyle\pm 0.009$ \\
 & \textsc{Raps} & $0.346 \,\scriptstyle\pm 0.011$ & $0.529 \,\scriptstyle\pm 0.016$ & $0.823 \,\scriptstyle\pm 0.013$ & $0.907 \,\scriptstyle\pm 0.009$ \\
 & \textsc{Saps} & $0.344 \,\scriptstyle\pm 0.010$ & $0.528 \,\scriptstyle\pm 0.017$ & $0.822 \,\scriptstyle\pm 0.014$ & $0.907 \,\scriptstyle\pm 0.009$ \\
 & \textsc{Greedy} & $\bm{0.408 \,\scriptstyle\pm 0.012}$ & $\bm{0.618 \,\scriptstyle\pm 0.014}$ & $\bm{0.888 \,\scriptstyle\pm 0.012}$ & $\bm{0.949 \,\scriptstyle\pm 0.006}$ \\
 & \textsc{None} & $0.197 \,\scriptstyle\pm 0.006$ & $0.290 \,\scriptstyle\pm 0.008$ & $0.432 \,\scriptstyle\pm 0.006$ & $0.475 \,\scriptstyle\pm 0.010$ \\ \bottomrule
\end{tabular}
}
\captionsetup{labelformat=empty}
\captionof{table}{$L = 25$}
\resizebox{0.75\textwidth}{!}{%
\begin{tabular}{@{}clcccc@{}}
\toprule
\textbf{Noise} & \multicolumn{1}{c}{\textbf{Method}} & $P(Y' = Y) = 0.3$ & $P(Y' = Y) = 0.5$ & $P(Y' = Y) = 0.7$ & $P(Y' = Y) = 0.9$ \\ \midrule
\multirow{6}{*}{0.3} & \textsc{Naive} & $0.435 \,\scriptstyle\pm 0.014$ & $0.656 \,\scriptstyle\pm 0.016$ & $0.813 \,\scriptstyle\pm 0.015$ & $0.939 \,\scriptstyle\pm 0.007$ \\
 & \textsc{Aps} & $0.413 \,\scriptstyle\pm 0.018$ & $0.626 \,\scriptstyle\pm 0.016$ & $0.792 \,\scriptstyle\pm 0.013$ & $0.932 \,\scriptstyle\pm 0.009$ \\
 & \textsc{Raps} & $0.412 \,\scriptstyle\pm 0.017$ & $0.625 \,\scriptstyle\pm 0.016$ & $0.791 \,\scriptstyle\pm 0.013$ & $0.932 \,\scriptstyle\pm 0.009$ \\
 & \textsc{Saps} & $0.425 \,\scriptstyle\pm 0.020$ & $0.635 \,\scriptstyle\pm 0.034$ & $0.802 \,\scriptstyle\pm 0.014$ & $0.934 \,\scriptstyle\pm 0.009$ \\
 & \textsc{Greedy} & $\bm{0.477 \,\scriptstyle\pm 0.016}$ & $\bm{0.685 \,\scriptstyle\pm 0.010}$ & $\bm{0.832 \,\scriptstyle\pm 0.011}$ & $\bm{0.956 \,\scriptstyle\pm 0.007}$ \\
 & \textsc{None} & $0.303 \,\scriptstyle\pm 0.012$ & $0.508 \,\scriptstyle\pm 0.006$ & $0.699 \,\scriptstyle\pm 0.011$ & $0.907 \,\scriptstyle\pm 0.009$ \\ \midrule
\multirow{6}{*}{0.5} & \textsc{Naive} & $0.417 \,\scriptstyle\pm 0.015$ & $0.640 \,\scriptstyle\pm 0.016$ & $0.803 \,\scriptstyle\pm 0.013$ & $0.937 \,\scriptstyle\pm 0.008$ \\
 & \textsc{Aps} & $0.399 \,\scriptstyle\pm 0.018$ & $0.613 \,\scriptstyle\pm 0.016$ & $0.782 \,\scriptstyle\pm 0.014$ & $0.927 \,\scriptstyle\pm 0.010$ \\
 & \textsc{Raps} & $0.398 \,\scriptstyle\pm 0.018$ & $0.612 \,\scriptstyle\pm 0.016$ & $0.781 \,\scriptstyle\pm 0.014$ & $0.927 \,\scriptstyle\pm 0.010$ \\ 
 & \textsc{Saps} & $0.407 \,\scriptstyle\pm 0.021$ & $0.620 \,\scriptstyle\pm 0.033$ & $0.793 \,\scriptstyle\pm 0.014$ & $0.931 \,\scriptstyle\pm 0.009$ \\
 & \textsc{Greedy} & $\bm{0.463 \,\scriptstyle\pm 0.024}$ & $\bm{0.674 \,\scriptstyle\pm 0.012}$ & $\bm{0.824 \,\scriptstyle\pm 0.012}$ & $\bm{0.953 \,\scriptstyle\pm 0.007}$ \\
 & \textsc{None} & $0.281 \,\scriptstyle\pm 0.011$ & $0.477 \,\scriptstyle\pm 0.009$ & $0.669 \,\scriptstyle\pm 0.012$ & $0.890 \,\scriptstyle\pm 0.005$ \\ \midrule
\multirow{6}{*}{0.7} & \textsc{Naive} & $0.396 \,\scriptstyle\pm 0.005$ & $0.606 \,\scriptstyle\pm 0.018$ & $0.770 \,\scriptstyle\pm 0.011$ & $0.925 \,\scriptstyle\pm 0.007$ \\
 & \textsc{Aps} & $0.380 \,\scriptstyle\pm 0.010$ & $0.587 \,\scriptstyle\pm 0.016$ & $0.754 \,\scriptstyle\pm 0.015$ & $0.915 \,\scriptstyle\pm 0.012$ \\
 & \textsc{Raps} & $0.380 \,\scriptstyle\pm 0.010$ & $0.587 \,\scriptstyle\pm 0.016$ & $0.754 \,\scriptstyle\pm 0.015$ & $0.916 \,\scriptstyle\pm 0.012$ \\
 & \textsc{Saps} & $0.387 \,\scriptstyle\pm 0.008$ & $0.590 \,\scriptstyle\pm 0.029$ & $0.764 \,\scriptstyle\pm 0.013$ & $0.920 \,\scriptstyle\pm 0.009$ \\
 & \textsc{Greedy} & $\bm{0.450 \,\scriptstyle\pm 0.017}$ & $\bm{0.657 \,\scriptstyle\pm 0.015}$ & $\bm{0.811 \,\scriptstyle\pm 0.015}$ & $\bm{0.944 \,\scriptstyle\pm 0.005}$ \\
 & \textsc{None} & $0.246 \,\scriptstyle\pm 0.009$ & $0.409 \,\scriptstyle\pm 0.011$ & $0.574 \,\scriptstyle\pm 0.011$ & $0.798 \,\scriptstyle\pm 0.010$ \\ \midrule
\multirow{6}{*}{1.0} & \textsc{Naive} & $0.376 \,\scriptstyle\pm 0.012$ & $0.569 \,\scriptstyle\pm 0.013$ & $0.727 \,\scriptstyle\pm 0.014$ & $0.907 \,\scriptstyle\pm 0.011$ \\
 & \textsc{Aps} & $0.367 \,\scriptstyle\pm 0.014$ & $0.560 \,\scriptstyle\pm 0.016$ & $0.722 \,\scriptstyle\pm 0.017$ & $0.905 \,\scriptstyle\pm 0.012$ \\
  & \textsc{Raps} & $0.367 \,\scriptstyle\pm 0.014$ & $0.560 \,\scriptstyle\pm 0.015$ & $0.722 \,\scriptstyle\pm 0.017$ & $0.906 \,\scriptstyle\pm 0.012$ \\
 & \textsc{Saps} & $0.367 \,\scriptstyle\pm 0.017$ & $0.556 \,\scriptstyle\pm 0.028$ & $0.725 \,\scriptstyle\pm 0.020$ & $0.907 \,\scriptstyle\pm 0.012$ \\
 & \textsc{Greedy} & $\bm{0.463 \,\scriptstyle\pm 0.022}$ & $\bm{0.665 \,\scriptstyle\pm 0.012}$ & $\bm{0.818 \,\scriptstyle\pm 0.017}$ & $\bm{0.946 \,\scriptstyle\pm 0.007}$ \\
 & \textsc{None} & $0.193 \,\scriptstyle\pm 0.011$ & $0.296 \,\scriptstyle\pm 0.013$ & $0.380 \,\scriptstyle\pm 0.012$ & $0.463 \,\scriptstyle\pm 0.007$ \\ \bottomrule
\end{tabular}%
}
\captionsetup{labelformat=empty}
\captionof{table}{$L = 50$}
\end{table}

\clearpage

\section{Empirical average test coverage of prediction sets across classification tasks}
\label{app:emp_coverage}

In this section, we report the empirical average test coverage across synthetic and real classification tasks achieved by the prediction sets constructed using conformal prediction and our greedy algorithms.
Table~\ref{app:tab:emp_coverage_syn} and Table~\ref{app:tab:emp_coverage_real} summarizes the results, which show that the prediction sets constructed using the best conformal predictors and our greedy algorithm achieve comparable empirical coverage.
Moreover, in those few settings in which the prediction sets constructed using conformal prediction achieve higher coverage (\eg, $P(Y'= Y) = 0.3$ and $\gamma = 0.3$), 
the average test accuracy achieved by the simulated human experts using the prediction sets constructed using conformal prediction is lower (cf. Table~\ref{tab:result-synthetic-experiment}), which underlines how coverage alone may be a bad proxy for estimating the average accuracy achieved by human experts using prediction sets.

\begin{table}[h]
\centering
\caption{Empirical average test coverage achieved by the prediction sets constructed using conformal prediction (\textsc{Naive}, \textsc{Aps}, \textsc{Raps}, \textsc{Saps}) and using the greedy algorithm (\textsc{Greedy})
on four synthetic classification tasks with four different (simulated) human experts, each with a different noise value $\gamma$.
For each classification task, the classifier $f$ used by conformal prediction and the greedy algorithm achieves a different average accuracy $P(Y'=Y)$.
For each (simulated) human expert, the best value of $\alpha$ for each conformal predictor is different and thus the reported coverage is different.
The number of labels is $L=10$ and, the size of the calibration set is $m=1000$. 
Each cell shows the average and standard deviation over $10$ runs.
We denote the best results for each task in bold.
}
\label{tab:emp-coverage}
\medskip
\resizebox{0.75\textwidth}{!}{%
\begin{tabular}{@{}clcccc@{}}
\toprule
$\gamma$ & \multicolumn{1}{c}{\textbf{\textsc{Method}}} & $\mathbb{P}[Y' = Y] = 0.3$ & $\mathbb{P}[Y' = Y] = 0.5$ & $\mathbb{P}[Y' = Y] = 0.7$ & $\mathbb{P}[Y' = Y] = 0.9$ \\ \midrule
\multirow{5}{*}{0.3} & \textsc{Naive} & $\bm{0.637 \,\scriptstyle\pm 0.066}$ & $0.802 \,\scriptstyle\pm 0.058$ & $0.908 \,\scriptstyle\pm 0.020$ & $0.973 \,\scriptstyle\pm 0.007$ \\
 & \textsc{Aps} & $0.603 \,\scriptstyle\pm 0.083$ & $\bm{0.804 \,\scriptstyle\pm 0.045}$ & $0.900 \,\scriptstyle\pm 0.026$ & $0.967 \,\scriptstyle\pm 0.006$ \\
& \textsc{Raps} &	$0.592 \,\scriptstyle\pm 0.087$ & $0.792 \,\scriptstyle\pm 0.032$  &$0.911 \,\scriptstyle\pm 0.015$  & $0.965 \,\scriptstyle\pm 0.012$ \\ 	
&\textsc{Saps}	&$0.580 \,\scriptstyle\pm 0.067$ &	$0.794 \,\scriptstyle\pm 0.041$ &	$0.890 \,\scriptstyle\pm 0.013$ &	$0.965 \,\scriptstyle\pm 0.010$ \\
 & \textsc{Greedy} & $0.502 \,\scriptstyle\pm 0.024$ & $0.764 \,\scriptstyle\pm 0.019$ & $\bm{0.920 \,\scriptstyle\pm 0.012}$ & $\bm{0.976 \,\scriptstyle\pm 0.004}$ \\ \midrule
\multirow{5}{*}{0.5} & \textsc{Naive} & $\bm{0.583 \,\scriptstyle\pm 0.104}$ & $\bm{0.770 \,\scriptstyle\pm 0.044}$ & $0.897 \,\scriptstyle\pm 0.021$ & $0.968 \,\scriptstyle\pm 0.009$ \\
 & \textsc{Aps} & $0.557 \,\scriptstyle\pm 0.100$ & $0.741 \,\scriptstyle\pm 0.036$ & $0.879 \,\scriptstyle\pm 0.016$ & $0.961 \,\scriptstyle\pm 0.007$ \\
 & \textsc{Raps} &	$0.534 \,\scriptstyle\pm 0.096$ &	$0.761 \,\scriptstyle\pm 0.040$ 	&$0.844 \,\scriptstyle\pm 0.043$ &	$0.950 \,\scriptstyle\pm 0.012$ \\	
& \textsc{Saps}	& $0.557 \,\scriptstyle\pm 0.081$ &	$0.768 \,\scriptstyle\pm 0.042$ &	$0.876 \,\scriptstyle\pm 0.014$ &	$0.961 \,\scriptstyle\pm 0.009$ 	\\
 & \textsc{Greedy} & $0.489 \,\scriptstyle\pm 0.027$ & $0.732 \,\scriptstyle\pm 0.015$ & $\bm{0.902 \,\scriptstyle\pm 0.013}$ & $\bm{0.970 \,\scriptstyle\pm 0.003}$ \\ \midrule
\multirow{5}{*}{0.7} & \textsc{Naive} & $\bm{0.535 \,\scriptstyle\pm 0.104}$ & $0.676 \,\scriptstyle\pm 0.047$ & $0.823 \,\scriptstyle\pm 0.044$ & $0.938 \,\scriptstyle\pm 0.013$ \\
 & \textsc{Aps} & $0.519 \,\scriptstyle\pm 0.115$ & $0.661 \,\scriptstyle\pm 0.036$ & $0.853 \,\scriptstyle\pm 0.015$ & $0.941 \,\scriptstyle\pm 0.013$ \\
 &\textsc{Raps} &	$0.506 \,\scriptstyle\pm 0.071$ &	$0.644 \,\scriptstyle\pm 0.056$ 	&$0.813 \,\scriptstyle\pm 0.019$ &	$0.938 \,\scriptstyle\pm 0.010$ 	 \\
&\textsc{Saps} &	$0.492 \,\scriptstyle\pm 0.079$ &	$\bm{0.721 \,\scriptstyle\pm 0.051}$ &	$0.858 \,\scriptstyle\pm 0.031$ &	$0.942 \,\scriptstyle\pm 0.011$ \\
 & \textsc{Greedy} & $0.473 \,\scriptstyle\pm 0.017$ & $0.696 \,\scriptstyle\pm 0.014$ & $\bm{0.861 \,\scriptstyle\pm 0.014}$ & $\bm{0.958 \,\scriptstyle\pm 0.005}$ \\ \midrule
\multirow{5}{*}{1.0} & \textsc{Naive} & $\bm{0.499 \,\scriptstyle\pm 0.075}$ & $0.608 \,\scriptstyle\pm 0.034$ & $0.750 \,\scriptstyle\pm 0.025$ & $0.905 \,\scriptstyle\pm 0.014$ \\
 & \textsc{Aps} & $0.453 \,\scriptstyle\pm 0.062$ & $0.631 \,\scriptstyle\pm 0.038$ & $0.806 \,\scriptstyle\pm 0.026$ & $0.912 \,\scriptstyle\pm 0.013$ \\
 & \textsc{Raps}	& $0.466 \,\scriptstyle\pm 0.071$ &	$0.611 \,\scriptstyle\pm 0.024$ &	$0.787 \,\scriptstyle\pm 0.026$ &	$0.920 \,\scriptstyle\pm 0.013$ \\	
& \textsc{Saps}	& $0.484 \,\scriptstyle\pm 0.070$ &	$0.609 \,\scriptstyle\pm 0.063$ 	& $0.764 \,\scriptstyle\pm 0.022$  &	$0.907 \,\scriptstyle\pm 0.006$ 	\\
 & \textsc{Greedy} & $0.457 \,\scriptstyle\pm 0.024$ & $\bm{0.664 \,\scriptstyle\pm 0.015}$ & $\bm{0.839 \,\scriptstyle\pm 0.014}$ & $\bm{0.951 \,\scriptstyle\pm 0.006}$ \\ \bottomrule
\end{tabular}%
}
\label{app:tab:emp_coverage_syn}
\end{table}

\begin{table*}[h]
\caption{Empirical coverage achieved by the prediction sets constructed using conformal prediction (\textsc{Naive}, \textsc{Aps}, \textsc{Raps}, \textsc{Saps}) and using the greedy algorithm (\textsc{Greedy}) on the ImageNet-16H dataset. Each cell shows the average and standard deviation over $10$ runs. We denote the best results for each noise level in bold.} \label{tab:emp-cov-real}
\vspace{-1em}
\begin{center}
    \begin{sc}
        \resizebox{0.6\textwidth}{!}{%
            \begin{tabular}{@{}lcccc@{}}
            \toprule
            \multicolumn{1}{c}{Method} & \multicolumn{1}{c}{$\omega = 80$} & \multicolumn{1}{c}{$\omega = 95$} & \multicolumn{1}{c}{$\omega = 110$} & \multicolumn{1}{c}{$\omega = 125$} \\ \midrule
\textsc{Naive} & $\bm{0.977 \,\scriptstyle\pm 0.005}$ & $\bm{0.972 \,\scriptstyle\pm 0.009}$ & $\bm{0.962 \,\scriptstyle\pm 0.010}$ & $0.923 \,\scriptstyle\pm 0.018$ \\
\textsc{Aps} & $0.970 \,\scriptstyle\pm 0.009$ & $0.962 \,\scriptstyle\pm 0.007$ & $0.943 \,\scriptstyle\pm 0.015$ & $0.887 \,\scriptstyle\pm 0.009$ \\
\textsc{Raps} & $0.967 \,\scriptstyle\pm 0.009$ & $0.956 \,\scriptstyle\pm 0.007$ & $0.944 \,\scriptstyle\pm 0.013$ & $0.876 \,\scriptstyle\pm 0.015$ \\
\textsc{Saps} & $0.967 \,\scriptstyle\pm 0.009$ & $0.966 \,\scriptstyle\pm 0.009$ & $0.946 \,\scriptstyle\pm 0.008$ & $0.921 \,\scriptstyle\pm 0.011$ \\
\textsc{Greedy} & $0.975 \,\scriptstyle\pm 0.008$ & $\bm{0.972 \,\scriptstyle\pm 0.010}$ & $\bm{0.962 \,\scriptstyle\pm 0.008}$ & $\bm{0.931 \,\scriptstyle\pm 0.007}$ \\ \bottomrule
            \end{tabular}%
        }
    \end{sc}
\end{center}
\vspace{-4mm}
\label{app:tab:emp_coverage_real}
\end{table*}

\clearpage
\newpage 

\section{Empirical conditional probability that a prediction set includes $\{y, \bar{y}\}$ given a ground-truth label $Y=y$ for additional classification tasks}
\label{app:confusing-labels-frequency}

Given a ground truth-label $Y=y$, let $\bar{y} = \argmax_{y'\neq y}C_{y'y}$ be the label that is most frequently mistaken with $y$.
Then, we estimate the empirical conditional probability that a prediction set includes $\{y, \bar{y}\}$ given $Y=y$ with the greedy algorithm and conformal prediction.  
Figure~\ref{fig:confusion-labels-appendix} summarizes the results for different $\gamma$ and $P(Y' = Y)$ values of several classification tasks where the classifier $f$ and the simulated human expert achieve different average accuracy on their own. 
The results show that, 
as the average accuracy achieved by the expert on their own worsens ($\gamma$ increases), 
the empirical probability that a prediction set constructed by the greedy algorithm includes $\{y, \bar{y}\}$ decreases.
\vspace{-1.5em}
\begin{figure}[h]
    \centering
     \captionsetup[subfloat]{labelformat=empty,textfont=small}
    \subfloat[{$P(Y'=Y) = 0.5, \gamma = 0.5$}]{\includegraphics[width=.41\textwidth]{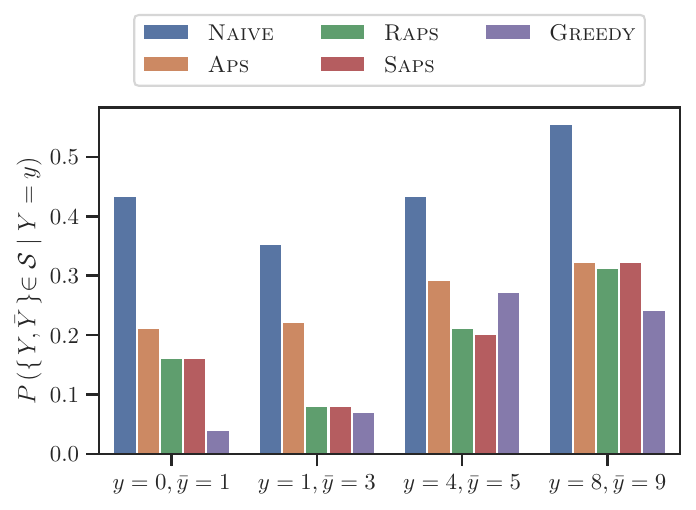}}
    \subfloat[{$P(Y'=Y) = 0.7, \gamma = 0.7$}]{\includegraphics[width=.41\textwidth]{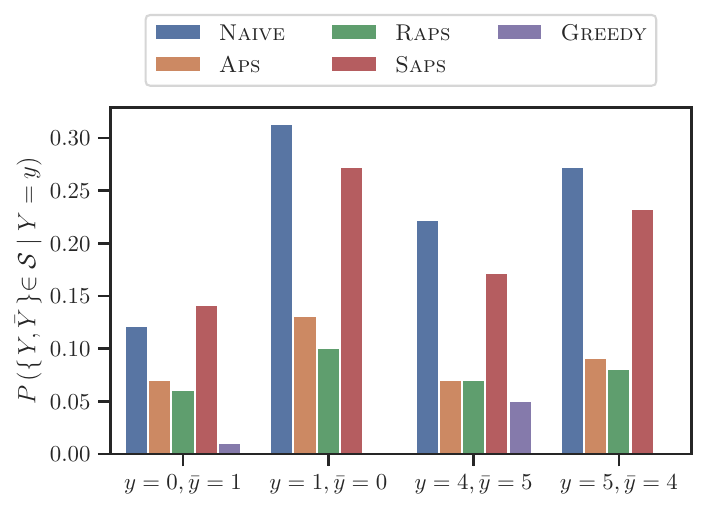}} \\
    \subfloat[{$P(Y'=Y) = 0.5, \gamma = 1.0$}]{\includegraphics[width=.41\textwidth]{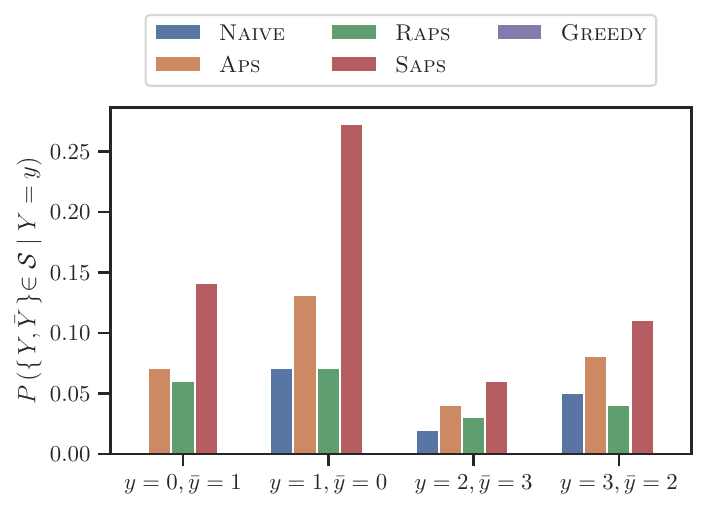}}
    \subfloat[{$P(Y'=Y) = 0.7, \gamma = 0.5$}]{\includegraphics[width=.41\textwidth]{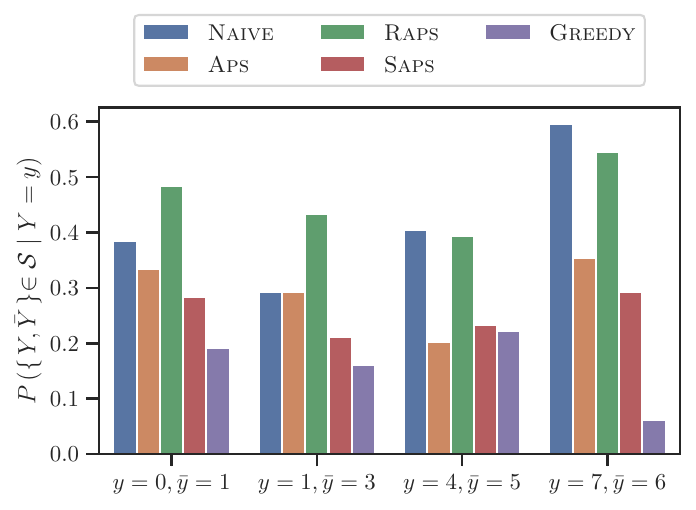}} \\
    \subfloat[{$P(Y'=Y) = 0.7, \gamma = 1.0$}]{\includegraphics[width=.41\textwidth]{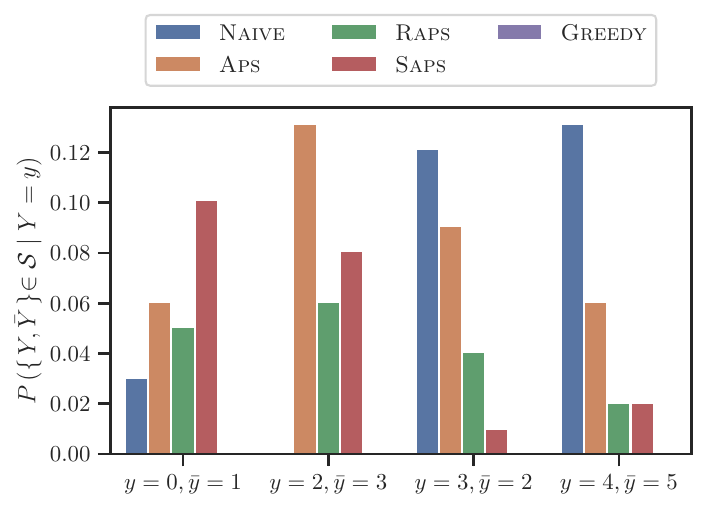}}
    \subfloat[{$P(Y'=Y) = 0.9, \gamma = 0.5$}]{\includegraphics[width=.41\textwidth]{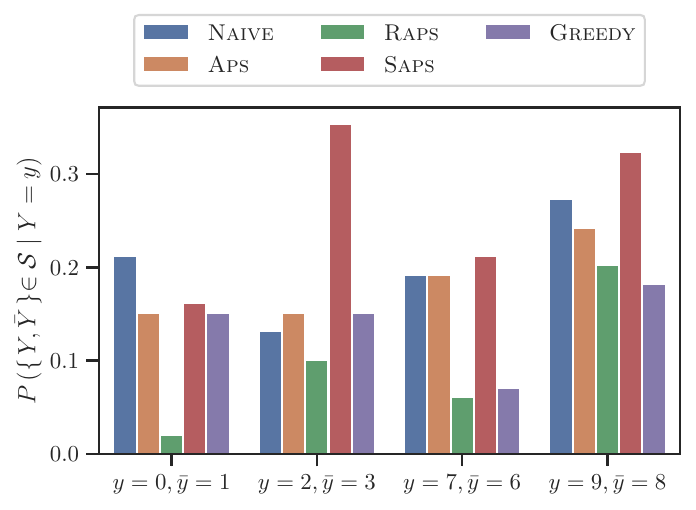}} \\
    \caption{Empirical conditional probability that a prediction set includes $\{y, \bar{y}\}$ given $Y=y$ with conformal prediction (\textsc{Naive}, \textsc{Aps}, \textsc{Raps} and \textsc{Saps}) and our greedy algorithm (\textsc{Greedy}).}
    \label{fig:confusion-labels-appendix}
\end{figure}

\clearpage
\newpage
\section{Evaluation of the Mixture of Multinomial Logit Models (MNLs)} 
\label{app:mnl-evaluation}

For the group of images with $\omega = 110$ from the ImageNet16H dataset, 
Straitouri et al.~\cite{straitouri2024designing} have gathered predictions made by real human experts using prediction sets constructed by all possible conformal predictors with the first non-conformity score we have considered in our experiments (\textsc{Naive}), given a choice of calibration set.
Here, we use these predictions to evaluate the goodness of fit of the mixture of MNLs used in our experiments. 

Figure~\ref{fig:mnl_vs_human} shows that the average accuracy achieved by a simulated expert following the mixture of MNLs and by real human experts using the prediction sets constructed with all possible conformal predictors, each with a different $\alpha$ value, using the choice of calibration set by Straitouri et al.~\cite{straitouri2024designing}.
The results show that, while the mixture of MNLs tends to overestimate the average accuracy achieved by the predictions made by real experts, the average accuracy follows the same qualitative trend. 

\begin{figure}[h]
    \centering
    \includegraphics[width=0.7\textwidth]{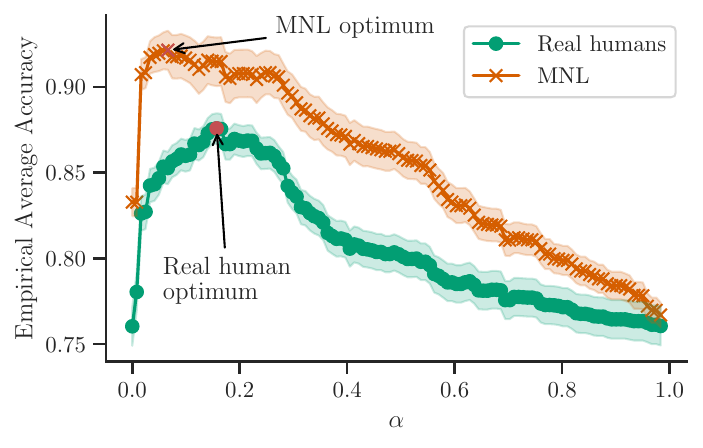}
    \caption{Average accuracy achieved by a simulated expert following the mixture of MNLs and by real human experts using the prediction sets constructed with all possible conformal predictors, each with a different $\alpha$ value, using the choice of calibration set by Straitouri et al.~\cite{straitouri2024designing}.
    We highlight in red the highest average accuracy for both the simulated and the real humans.}
    \label{fig:mnl_vs_human}
\end{figure}

\newpage
\section{Conformal Prediction under different $\alpha$ values}
\label{app:conformal}

In this section, we estimate the average accuracy achieved by a (simulated) expert using the prediction sets constructed with conformal prediction (\textsc{Naive}, \textsc{Aps}, \textsc{Raps} and \textsc{Saps}) on the ImageNet16H dataset under different $\alpha$ values.
Figure \ref{fig:performance-cp-different-alphas} summarizes the results.
\begin{figure}[h]
    \centering
     \captionsetup[subfloat]{labelformat=empty,textfont=small}
    \subfloat[{$\omega = 80$}]{\includegraphics[width=.5\textwidth]{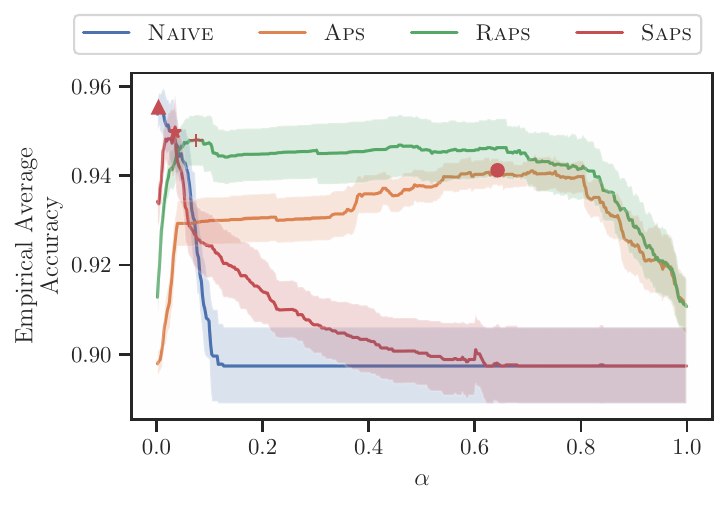}}
    \subfloat[{$\omega = 95$}]{\includegraphics[width=.5\textwidth]{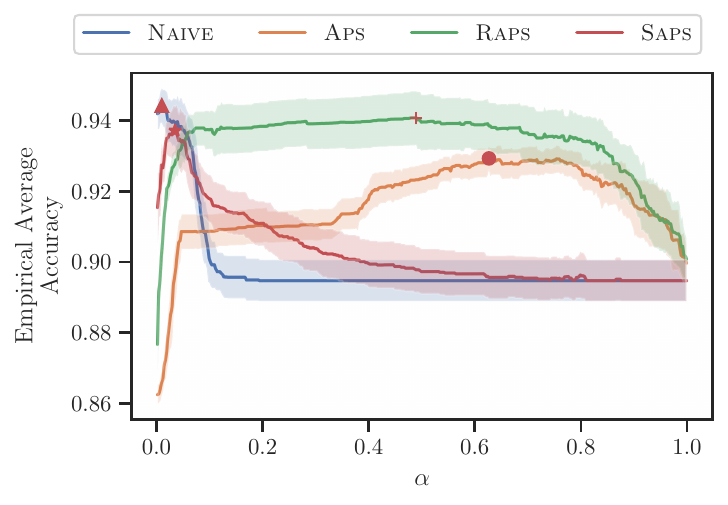}} \\ [-2ex]
    \subfloat[{$\omega = 110$}]{\includegraphics[width=.5\textwidth]{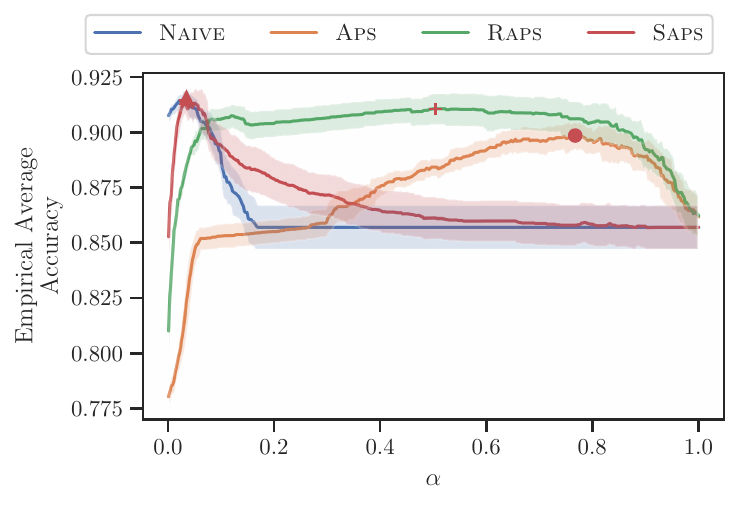}}
    \subfloat[{$\omega = 125$}]{\includegraphics[width=.5\textwidth]{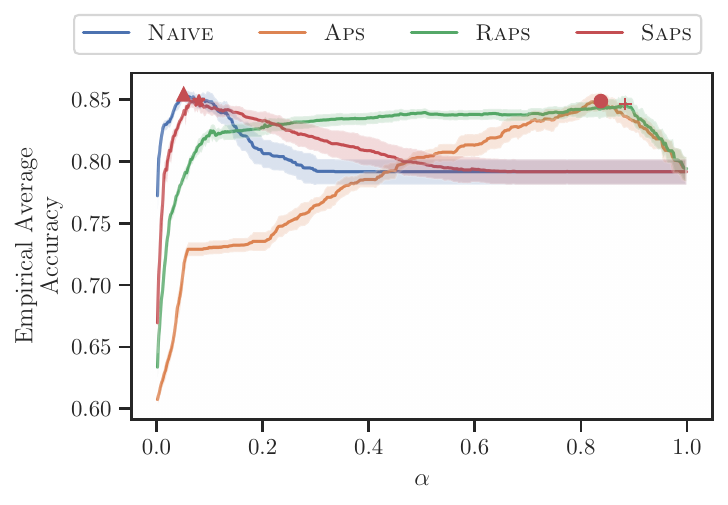}}
    \caption{Average accuracy achieved by a (simulated) expert using the prediction sets constructed with conformal prediction (\textsc{Naive}, \textsc{Aps}, \textsc{Raps} and \textsc{Saps}) on the ImageNet16H dataset under different $\alpha$ values.
    Each panel shows the average and standard error over $10$ runs.
    We highlight with a red marker the highest average accuracy for the simulated humans under each conformal predictor.
    }
    \label{fig:performance-cp-different-alphas}
\end{figure}

\end{document}